\newtheorem{theorem}{Theorem}
\newtheorem{remark}[theorem]{Remark}
\newtheorem{assumption}[theorem]{Assumption}
\newtheorem{definition}[theorem]{Definition}
\newtheorem{lemma}[theorem]{Lemma}
\newtheorem{corollary}[theorem]{Corollary}
\def \AlgoName{\text{STORM-PG}\xspace}
\newcommand{\bxi}{\bm{\xi}}
\def\ep{\varepsilon}
\newcommand{\RR}{\mathbb{R}}
\newcommand{\EE}{\mathbb{E}}
\newcommand{\cB}{\mathcal{B}}
\newcommand{\cM}{\mathcal{M}}
\newcommand{\cS}{\mathcal{S}}
\newcommand{\cA}{\mathcal{A}}
\newcommand{\cP}{\mathcal{P}}
\newcommand{\cR}{\mathcal{R}}
\newcommand{\cF}{\mathcal{F}}
\newcommand{\cO}{\mathcal{O}}
\newcommand{\bg}{\mathbf{g}}
\def\blue#1{\textcolor{cyan}{#1}}
\def\red#1{\textcolor{black}{#1}}
\def\blue#1{}
\titlespacing\section{0pt}{11pt plus 2pt minus 2pt}{5pt plus 1pt minus 2pt}
\titlespacing\subsection{0pt}{2pt plus 2pt minus 2pt}{0pt plus 1pt minus 2pt}
\titlespacing\subsubsection{0pt}{2pt plus 2pt minus 2pt}{0pt plus 1pt minus 2pt}
\titlespacing\paragraph{0pt}{0pt plus 2pt minus 2pt}{5pt plus 2pt minus 2pt}
\title{Stochastic Recursive Momentum for Policy Gradient Methods}
\author{%
Huizhuo Yuan
\\
Peking University
\\
\small{huizhuo.yuan@gmail.com}
\And
Xiangru Lian
\\
University of Rochester
\\
\small{admin@mail.xrlian.com}
\And
Ji Liu
\\
Kwai Inc.
\\
\small{ji.liu.uwisc@gmail.com}
\And
Yuren Zhou
\\
Duke University
\\
\small{yuren.zhou@duke.edu}
}
\def\discount{\alpha}
\begin{document}

\maketitle

\begin{abstract}
In this paper, we propose a novel algorithm named STOchastic Recursive Momentum for Policy Gradient (STORM-PG), which operates the stochastic recursive variance-reduced policy gradient method in an exponential moving average fashion. STORM-PG enjoys a provably sharp $O(1/\epsilon^3)$ sample complexity bound for STORM-PG, matching the best-known convergence rate for policy gradient algorithm. In the mean time, STORM-PG avoids the alternations between large batches and small batches which persists in comparable variance-reduced policy gradient methods, allowing considerably simpler parameter tuning. Numerical experiments depicts the superiority of our algorithm over comparative policy gradient algorithms.
\end{abstract}

\section{Introduction}

Reinforcement Learning (RL) \citep{Sutton:1998:IRL:551283} is a dynamic learning approach that interacts with the environment and execute actions according to the current state, so that a particular measure of cumulative rewards is maximized.
Model-free deep reinforcement learning algorithms \citep{lecun2015deep} have achieved remarkable performance in a range of challenging tasks, including stochastic control \citep{munos1998reinforcement}, autonomous driving \citep{shalev2016safe}, games \citep{mnih2013playing, silver2016mastering}, continuous robot control tasks \citep{Schulman:2015:TRP:3045118.3045319}, etc.

Generally, there are two aspects of methods of solving a model-free RL problem:
value-based methods such as Q-Learning~\citep{tesauro1995temporal}, SARSA~\citep{rummery1994line}, etc., as well as policy-based methods such as Policy Gradient (PG) algorithm \citep{Sutton:1999:PGM:3009657.3009806}.
PG algorithm models the state-to-action transition probabilities as a parameterized family, and the cumulative rewards can be regarded as a function of the parameters.
Thus, policy gradient based problem shares a formulation that is analogous to the traditional stochastic optimization problem.

One critical challenge of reinforcement learning algorithms compared to traditional gradient based algorithms lies on the issue of \textit{distribution shift}, that is, the data sample distribution encounters distributional changes throughout the learning dynamics \citep{pmlr-v80-papini18a}.
To correct this, (an off-policy version of) Policy Gradient (PG) \citep{Sutton:1999:PGM:3009657.3009806} method and Trust Region Policy Optimization (TRPO) \citep{Schulman:2015:TRP:3045118.3045319} method have been proposed as general off-policy algorithms to optimize policy parameters using gradient based methods.\footnote{%
In reinforcement learning literature, on-policy algorithms make use of samples rolled out by the current policy for only once, and hence suffer from high sample complexities.
On the contrary, off-policy algorithms in earlier work enjoy reduced sample complexities since they reuse the past trajectory samples~\citep{mnih2015human, lillicrap2015continuous}.
Nevertheless, they are often brittle and sensitive to hyperparameters and hence suffer from reproducibility issues \citep{henderson2018deep}.
}
PG method directly optimizes the policy parameters via gradient based algorithms, and it dates back to the introduction of REINFORCE~\citep{williams1992simple} and GPOMDP~\citep{baxter2001infinite} estimators that our algorithm is built upon.
% and A3C \citep{mnih2016asynchronous}
%
%and Policy Optimization via Importance Sampling (POIS) \citep{NIPS2018_7789} have been introduced.
%POIS takes the uncertainty of importance sampling into account and derives a surrogate objective function, which allows using past samples efficiently.
%TRPO merges the trust region method \citep{NoceWrig06} and natural gradient method \citep{Amari:1998:NGW:287476.287477} into the policy gradient framework, where at each step the policy parameter moves towards the direction that improves the mean episode reward under the constraint of the Kullback-Leibler (KL) divergence with respect to the previous policy distribution.
%Compared to traditional gradient descent, which corresponds to the $L^2$ constraint, constraints that use KL distance has been widely adopted. The KL constraint results in a natural gradient update firstly proposed by \citet{Amari:1998:NGW:287476.287477}. Later \citet{Kakade:2001:NPG:2980539.2980738} merges the natural gradient into the policy gradient (See \citet{DBLP:journals/corr/Martens14} for further extensions). Natural gradient algorithm attempts to solve the problem of vanishing gradients in the so-called “plateau areas”. TRPO which strictly constrain the KL distance achieves a more robust performance on continuous control tasks.

The problem of high sample complexity arises frequently in policy gradient based methods due to a combined effect of high variance incurred during the training phase \citep{henderson2018deep, duan2016benchmarking} and distribution shift, limiting the ability of model-free deep reinforcement learning algorithms.
Such a combined effect signals the potential need of adopting variance-reduced gradient estimators \citep{johnson2013accelerating,nguyen2017sarah,zhou2018stochastic,fang2018spider} to accelerate off-policy algorithms.
Recently proposed variance-reduced policy gradient methods include SVRPG \citep{pmlr-v80-papini18a,xu2019improved} and SRVRPG \citep{xu2019sample}
% as well as SVRPO (as a TRPO variant) \citep{DBLP:journals/corr/abs-1710-06034}.
theoretically improve the sample efficiency over PG.
This is corroborated by empirical findings: we observe that the variance-reduced gradient alternatives SVRPG and SRVRPG accelerate and stabilize the training processes, mainly due to their accommodations with larger stepsizes and reduced variances \citep{pmlr-v80-papini18a,DBLP:journals/corr/abs-1710-06034}.

Nevertheless compared to the vanilla PG method, one major drawback of the aforementioned variance-reduced policy gradient methods is their alternations between large and small batches of trajectory samples, spelled as the \textit{restarting mechanism}, so the variance can be effectively controlled.
%Alternatives that avoids such restarting mechanisms either adopt a smaller stepsize or a larger batchsize as iteration number increases, such that the variance accumulation cannot proceed indefinitely.
In this paper, we circumvent such a restarting mechanism by introducing a new algorithm named STOchastic Recursive Momentum Policy Gradient (\AlgoName), which utilizes the idea of a recently proposed variance-reduced gradient method STORM \citep{cutkosky2019momentum} and blends with policy gradient methods.
STORM is an online variance-reduced gradient method that adopts an \textit{exponential moving averaging mechanism} that persistently discount the accumulated variance.
In the nonconvex smooth stochastic optimization setting, STORM achieves an $O(\epsilon^{-3})$ queries complexity that ties with online SARAH/SPIDER and matches the lower bound for finding an $\epsilon$-first-order stationary point \citep{arjevani2019lower}.
As a closely related variant, SARAH/SPIDER based stochastic variance-reduced compositional gradient methods also achieve an $O(\epsilon^{-3})$ complexity under a different set of assumptions \citep{hu2019efficient,zhang2019multi}.
%In the nonconvex smooth stochastic optimization setting, STORM achieves an $O(\epsilon^{-3})$ queries complexity that ties with online SARAH/SPIDER and matches the lower bound for finding an $\epsilon$-first-order stationary point \citep{arjevani2019lower}.
%Both the TRPO and the POIS are importance sampling based off-policy policy gradient algorithm. 
%To apply our \AlgoName algorithm for reinforcement learning, we need to overcome new challenges.
%Because the data distribution of the current data to be learned is changing in every iteration, the gradient based algorithms in reinforcement learning are a lot different than in nonconvex optimization settings.
%%That makes reinforcement learning based algorithms a lot different than GD based algorithms. 
%variance-reduced gradient techniques in reinforcement learning are not guaranteed to behave well because of the interference of the exploration-exploitation trade-off. 
Our proposed \AlgoName algorithm blends such a state-of-the-art variance-reduced gradient estimator with the PG algorithm.
Instead of introducing a restarting mechanism in concurrent variance-reduced policy gradient methods, our \AlgoName algorithm guarantees the variance stability by adopting the exponential moving averaging mechanism featured by STORM.
In our experiments, we see that the variance stability of our variance-reduced gradient estimator allows our \AlgoName algorithm to achieve a (perhaps surprisingly)  overall mean rewards improvement in reinforcement learning tasks.

\paragraph{Our Contributions}
We have designed a novel policy gradient method that enjoys several benign properties, such as using an exponential moving averaging mechanism instead of restarting mechanism to reduce our gradient estimator variance.
%Our method significantly reduces the number of tuning parameters and simplifies the recursive loop structure in variance-reduced policy gradient algorithms.
Theoretically, we prove a state-of-art convergence rate for our proposed \AlgoName algorithm in our setting.
Experimentally, our \AlgoName algorithm depicts strikingly desirable performance in many reinforcement learning tasks.

\paragraph{Notational Conventions}
Throughout the paper, we treat the parameters $L_d$, $C_\gamma$, $R$, $M$, $N$, $\Delta$ and $\sigma$ as global constants.
Let $h$ denote the index of steps that the agent takes to interact with the environment and $H$ is the maximum length of an episode.
Let $\|\cdot\|$ denote the Euclidean norm of a vector or the operator norm of a matrix induced by Euclidean norm. 
\blue{For fixed $t \ge 0$, let $\bxi_{0:t}$ denotes the sequence generated by the learning process $\{\bxi_0, \bxi_1, \bxi_2, \ldots, \bxi_t\}$.
Let $\cF_t$ denotes the $\sigma$-algebra generated by $\{\bxi_{0:t}\}$. $\EE[\cdot\mid\cF_t]$ is the conditional expectation up to time $t$.}
For fixed $t \ge 0$, let $\cB_t$ denotes the batch of samples choosen at the $t$'th iteration and $\cB_{0:t} = \{\cB_0, \cB_1, \ldots, \cB_t\}$. $\cF_t$ is the $\sigma$-algebra generated by $\cB_{0:t}$ and $\EE[\cdot\mid\cF_t]$ is the conditional expectation based on samples generated up to the $t$'th iteration.
Other notations are explained at their first appearances.

\paragraph{Organization}
The rest of our paper is organized as follows.
Section~\ref{sec:pre} introduces the backgrounds and preliminaries of the policy gradient algorithm.
Section~\ref{sec:storm} formally introduces our \AlgoName algorithm design.
Section~\ref{sec:def} introduces the necessary definitions and assumptions.
Section~\ref{sec:conv} presents the convergence rate analysis, whose corresponding proof is provided in Section~\ref{sec:proof}.
Section~\ref{sec:exp} conducts experimental comparison on continuous control tasks, and Section~\ref{sec:final} concludes our results.

\begin{table*}[t]
\centering
\begin{tabular}{lcr}
\hline
Algorithms & Complexity & Restarting \\
\hline
PGT \citep{sutton2000policy} & $O(\epsilon^{-4})$ &N\\
REINFORCE \citep{williams1992simple} & $O(\epsilon^{-4})$&N\\
GPOMDP \citep{baxter2001infinite} & $O(\epsilon^{-4})$&N\\
SVRPO \citep{DBLP:journals/corr/abs-1710-06034} & N/A&Y\\
SVRPG \citep{xu2019improved} & $O(\epsilon^{-10 / 3})$ &Y\\
SRVRPG \citep{xu2019sample} & $O(\epsilon^{-3})$&Y \\
STORM-PG (This paper) & $O(\epsilon^{-3})$ &N\\
\hline
\end{tabular}
\caption{Sample complexities of comparable algorithms for finding an $\epsilon$-accurate solution.}
\label{tab:my_label}
\end{table*}

\section{Policy Gradient Prelimilaries}
\label{sec:pre}
In this section we introduce the background of policy gradient and the objective function that our algorithm is based on. \red{The basic operation of the PG algorithm is similar to the gradient acsent algorithm with some RL specific gradient estimators. In Section~\ref{subsec:reinforce} we introduce the REINFORCE estimator which is the basis of many follow up PG works. In Section~\ref{subsec:gpomdp} we introduce the GPOMDP estimator which further reduces the variance and is the fundation of our algorithm. Finally in Section~\ref{subsec:gaussian} we formulate the probability induced by the policy as a Gaussian distribution, which is a special case adopted in our experiments.}
%\red{\subsection{REINFORCE Estimator}}
\subsection{REINFORCE Estimator}
\label{subsec:reinforce}
We consider the standard reinforcement learning setting of solving a discrete time finite horizon Markov Decision Process (MDP) $\cM = \{\cS, \cA, \cP, \cR, \gamma , \rho\}$ which models the behavior of an agent interacting with a given environment.
Let $\cS$ be the space of states in the environment, $\cA$ be the space of actions that the agent can take, $\cP:\cS \times \cA \rightarrow \cS$ be the transition probability from $s\in \cS$ to $s' \in \cS$ given $a\in \cA$, $R: \cS\times \cA \rightarrow \RR$ be the reward function of taking action $a\in \cA$ at state $s\in \cS$, $\gamma$ be the discount factor \red{that adds smaller weights to rewards at more distant future}, and $\rho$ be the initial state distribution.

We mainly focuses on in this paper the policy gradient setting where there is a \textit{policy} $\pi(a\mid s)$ as the probability of taking action $a$ given state $s$ such that $\sum_{a\in\cA} \pi(a\mid s) = 1$;
\blue{$\pi(\cdot\mid s)$ is called a \textit{policy} which models the agent's state transition after interacting with the environment.}
The \textit{policy} $\pi(\cdot\mid s)$ models the agent's behavior after experiencing the environment's state $s$.
Given finite state and action spaces, the policy $\pi(a\mid s)$ can be coded in a $|\cS| \times |\cA|$ tabular.
However when the state/action space is large or countably infinite, we adopt a probability mass function class $\pi_{\bxi}(a\mid s)$, parameterized by $\bxi \in \RR^d$, as an approximated class of functions to such a tabular.
Given a policy $\pi_{\bxi}(\cdot\mid s)$, the probability of a trajectory $\tau$ can be expressed in terms of the transition probability $p(s'\mid s,a)$ and the policy $\pi_{\bxi}(a\mid s)$: 
\begin{equation}\label{eq:defp}
p(\tau\mid \bxi)
=
\prod_{t=0}^{H-1} \pi_{\bxi}(a_t\mid s_t)\cdot p(s_{t+1}\mid s_t,a_t)
,
\end{equation}
where the trajectory $\tau := (s_0, a_0, s_1, a_1, \ldots, s_H, a_H)$ is the sequence that alters between states and actions, and $H$ is the maximum length (episode) of all trajectories.

Policy gradient algorithms target to maximize the expected sum of discounted rewards over trajectories $\tau$:
\begin{equation}
\label{eq:objective}
L(\bxi) \equiv \EE_{\tau \sim p(\cdot\mid \bxi)} R(\tau)
:= 
\EE_{\tau \sim p(\cdot\mid \bxi)} \left[
\sum_{t=0}^{H-1} \gamma^t r(s_t, a_t)
\right]
\end{equation}
where the expectation is taken over a parameterized probability distribution $p(\cdot \mid \bxi)$ with parameter $\bxi$, as is defined in~\eqref{eq:defp}.
Standard algorithm for maximizing \eqref{eq:objective} is the gradient descent algorithm (GD) which updates $\bxi_{t}$ on the direction of the objective gradient with a fixed learning rate $\eta$:
$$
\bxi_{t+1} = \bxi_t + \eta \nabla_{\bxi} L(\bxi_t),
$$
where the gradient $\nabla_{\bxi} L(\bxi)$ can be calculated as follows by combining~\eqref{eq:defp} and~\eqref{eq:objective}:
\begin{equation}
\begin{aligned}
&\quad
\nabla_{\bxi} L(\bxi)
\\&=
\nabla_{\bxi}\int p(\tau\mid\bxi)R(\tau)d\tau
= 
\int \nabla_{\bxi} p(\tau\mid\bxi) R(\tau) d\tau
\\&=
\int \frac{\nabla_{\bxi} p(\tau\mid\bxi)}{p(\tau\mid\bxi)}R(\tau)\ p(\tau\mid\bxi)d\tau
\\&=
\EE_{\tau\sim p(\cdot\mid\bxi)} \left[\nabla_{\bxi} \log p(\tau\mid\bxi)R(\tau)\right]
.
\end{aligned}
\end{equation}
To avoid the costly (or infeasible in the case of infinite spaces) full gradient $\nabla_{\bxi} L(\bxi)$ computations which requires sampling \textit{all possible trajectories}, we adopt its Monte Carlo estimator as:
\begin{equation}
\begin{aligned}
\label{eq:reinforce}
\hat{\nabla}_{\bxi} L(\bxi)
&=
\frac{1}{N} \sum_{i=1}^N \nabla \log p(\tau_i\mid\bxi) R(\tau_i)
%\\
%&=:
%\frac{1}{N}\sum_{i=1}^N d(\tau_i\mid\bxi)
,\end{aligned}
\end{equation}
where the trajectories $\tau_i$ are generated according to the trajectory distribution $p(\cdot\mid\bxi)$.
%and we use $d(\tau_i\mid \bxi) := \nabla_{\bxi} \log p(\tau_i\mid\bxi)R(\tau_i)$ to denote the estimated gradient using one sample trajectory $\tau_i$.
The above estimator in policy gradient is known as the REINFORCE estimator~\citep{williams1992simple}.

%\red{\subsection{GPOMDP Estimator}}
\subsection{GPOMDP Estimator}
\label{subsec:gpomdp}
One of the disadvantage of REINFORCE estimator lies on its excessive variance of trajectories introduced throughout the end of the episode.
Using a simple fact that for any constant $b$, $\EE [\nabla \log \pi_{\bxi}(a\mid s) b] = 0$ and the observation that rewards obtained before step $h$ is irrelevant with $\pi(a\mid s)$ after step $h$, the REINFORCE estimator~\eqref{eq:reinforce} can be substituted by the following GPOMDP~\citep{baxter2001infinite} unbiased estimator which uses a baseline to reduce the variance:
$$
\begin{aligned}
&\quad 
\hat{\nabla}_{\bxi} L(\bxi)
\\&\hspace{-.1in}
=
\frac{1}{N} \sum_{i=1}^N \sum_{h=0}^{H-1}\left(\sum_{t=0}^h\nabla \log \pi_{\bxi}(a_t\mid s_t)\right)
\left(\gamma^h r(s_h, a_h) -b_h\right)
,
\end{aligned}
$$
where for each $h\in [0,H-1]$, $b_h$ is a constant.
Throughout this paper, we use $d_i(\bxi)$ to refer to the unbiased GPOMDP estimator of $\nabla_{\bxi} L(\bxi)$:
\begin{equation}
\label{eq:estimator}
\begin{aligned}
&\quad
d_i(\bxi)
\\&=
\sum_{h=0}^{H-1}\left(\sum_{t=0}^h\nabla \log \pi_{\bxi}(a_t\mid s_t)\right)
\left(\gamma^h r(s_h, a_h) -b_h\right)
.
\end{aligned}
\end{equation}
where $(a_t, s_t)$ are action-state pairs along the trajectory $\tau_i$.
We adopt a variance-reduced version of GPOMDP estimator throughout the end of this paper.

\subsection{Gaussian Policy}
\label{subsec:gaussian}
Finally, we introduce the Gaussian policy setting.
In control tasks where the state and action spaces can be continuous, one choice of the policy function class is the Gaussian family:
$$
\pi_{\bxi}(a\mid s)
=
\frac{1}{\sqrt{2\pi\sigma^2}}\exp\left\{
-\frac{(\bxi^\top \psi(s) - a)^2}{2\sigma^2}
\right\}
,
$$
where $\sigma^2$ is the fixed variance parameter and $\psi(s):\cS \rightarrow \RR^d$ is a bounded feature mapping from the state space $\cS$ to $\RR^d$.
As the readers will see, the Gaussian policy satisfies all assumptions in Section~\ref{sec:def};
more detailed discussions can be found in~\citet{xu2019improved},~\citet{xu2019sample} and~\citet{pmlr-v80-papini18a}.

\section{STORM-PG Algorithm}
\label{sec:storm}
Recall our goal is to solve the general policy optimization problem:
\begin{equation}\label{eq:prob}
\text{maximize}_{\bxi} ~~
L(\bxi),
\end{equation}
and $d_i(\bxi)$ defined in \eqref{eq:estimator} is an unbiased estimator of the true gradient $\nabla_{\bxi} L(\bxi)$.
The simplest algorithm, stochastic gradient ascent, updates the iterates as
$$
\bxi_{t+1} = \bxi_t + \eta d_i(\bxi_t)
,
$$
where $i$ is chosen randomly from a data set sampled with the current distribution $\pi_{\bxi}$.
To further unfold this expression, we note that $d_i(\bxi) = \sum_{h=0}^{H-1} d_{i,h}(\bxi)$ where
$$
d_{i,h}(\bxi)
=
\left(\sum_{t=0}^h \nabla \log \pi_{\bxi}(a_t \mid s_t) \right)\left(\gamma^h r(s_h, a_h) - b_h\right)
.
$$
To remedy the distribution shift issue in reinforcement learning tasks, we introduce an importance sampling weight between trajectories generated by $\bxi$ and the ones generated by $\bxi '$ as
$$
d_i^{\bxi '}(\bxi) = \sum_{h=0}^{H-1}\frac{p(\tau_{i, h}\mid \bxi)}{p(\tau_{i, h} \mid \bxi ')} d_{i,h}(\bxi)
,
$$
where  $\tau_{i, h}$ is a trajectory generated by $p(\cdot \mid \bxi ')$ truncated at time $h$.
To further reduce the variance introduced by the randomness in $i$, SVRG introduced a variance-reduced estimator estimator of $\nabla_{\bxi} L(\bxi_t)$
\begin{equation}\label{eq:svrg}
\bg_t := d_i(\bxi_t) - d_i^{\bxi_{t+1}}(\tilde{\bxi}) + \tilde{u}
,
\end{equation} 
where $\tilde{\bxi}$ is a fixed point calculated once every $q$ steps and $\tilde{u}$ is a fixed estimation of the gradient at point $\tilde{\bxi}$.
Instead of the aforementioned SVRG-type estimator which was adopted by \citet{xu2019sample}, \citet{pmlr-v80-papini18a} adopts instead a recursive estimator 
\begin{equation}\label{eq:sarah}
\bg_{t+1} = d_i(\bxi_{t+1}) - d_i^{\bxi_{t+1}}(\bxi_{t}) + \bg_t
\end{equation} 
to track the gradient $\nabla_{\bxi} L(\bxi_{t+1})$ at each time. 
In above, $\bg_0$ is scheduled to be updated once every $q$ iterations as a large-batch estimated gradient.

\subsection{\AlgoName Estimator}
In this paper, we propose to use the STORM estimator as introduced in \citep{cutkosky2019momentum}, which is essentially an \textit{exponential moving average SARAH estimator} 
\begin{equation}
\label{eq:storm}
\bg_{t+1} = (1 - \discount)[d_i(\bxi_{t+1}) - d_i^{\bxi_{t+1}}(\bxi_t) + \bg_t] + \discount d_i(\bxi_{t+1})
.
\end{equation}
When $\alpha = 1$, the STORM-PG estimator reduces to the vanilla stochastic gradient estimator and when $\alpha = 0$, the STORM-PG esimator reduces to the SARAH estimator.
As our $\alpha$ is chosen between $(0, 1)$, the estimator is a combination of an variance reduced biased estimator and an unbiased estimator.
In addition, \eqref{eq:storm} can be rewritten as
$$
\bg_{t+1} = d_i(\bxi_{t+1}) + (1 - \discount)[\bg_t - d_i^{\bxi_{t+1}}(\bxi_t)]
,
$$
which can be interpreted as an exponentially decaying mechanism via a factor of $(1-\alpha)$.
We can see later in the proof of the convergence rate that the estimation error $\EE\|\bg_t - \nabla L(\bxi)\|^2$ can be controlled by a proper choice of $a$ while in SARAH case to control the convergence speed, the batch size $B$ or the learning rate $\eta$ have to be tuned accordingly.
This allows us to operate a single-loop algorithm instead of a double-loop algorithm.
We only need a large batch to estimate $\bg_0$ once, and do mini-batch or single batch updates till the end of the algorithm. 
This estimator hinders the accumulation of estimation error in each round.

We describe our \AlgoName as in Algorithm~\ref{alg:hybrid_sarah}.

\begin{algorithm}[t]
\caption{\AlgoName}
\label{alg:hybrid_sarah}
\begin{algorithmic}
\STATE \textbf{Input}: Number of epochs $T$, initial batch size $S_0$, step size $\eta$, mini-batch size $B$, initial parameter $\bxi_0$
\STATE Sample $S_0$ trajectories $\{\tau_i\}_{i\in \cS_0}$ from $p(\cdot\mid \bxi_0)$
\STATE
Calculate an initial estimate of $\nabla_{\bxi} L(\bxi_0)$:
\begin{equation}
\bg_{0} = \frac{1}{S_0} \sum_{i \in \cS_0} d_i(\bxi_0)
\end{equation}
%\STATE
%Draw $S_0$ indices with replacement $\cS_0\subseteq [1,n]$ and let $\bg_0= \frac{1}{S_0}\sum\limits_{i\in \cS_0} \nabla J_{i}(\bxi_0)$
%\FOR {$t=0$ \TO $T-1$}
\FOR {$t=0$ to $T-1$}
\STATE
Update $\bxi_{t+1}=\bxi_t + \eta \bg_t$
\STATE
Sample $B$ trajectories $\{\tau_i\}_{i\in\cB}$ from $p(\cdot\mid\bxi_{t+1})$
%\STATE
%Randomly draw an index $i_{t}\in [1,n]$ and let
\begin{equation}
\label{eq:update}
\begin{aligned}
\bg_{t+1}
= &
(1 - \discount)\left(\frac{1}{B}\sum_{i\in\cB} \left[\bg_t - d_i^{\bxi_{t+1}}(\bxi_{t})\right]\right)
\\& +
 \frac{1}{B}\sum_{i\in\cB}d_i(\bxi_{t+1})\
\end{aligned}
\end{equation}

\ENDFOR
\STATE
%\RETURN
Output $\widetilde{\bxi}$ chosen uniformly at random from $\{\bxi_t\}_{t=0}^{T-1}$
\end{algorithmic}
\end{algorithm}

%\pb
\section{Definitions and Assumptions}
\label{sec:def}
In this section, we make several definitions and assumptions necessary for analyzing the convergence of the \AlgoName Algorithm. First of all, we define the $\epsilon$-accurate solution of a policy gradient algorithm:
\begin{definition}[$\epsilon$-accurate solution]
We call $\bxi \in \RR^d$ an $\epsilon$-accurate solution if and only if
$$
\|\nabla_{\bxi} L(\bxi)\|\le \epsilon
.
$$
\end{definition}
We say that an stochastic policy gradient based algorithm reaches an $\epsilon$-accurate solution if and only if
$$
\EE\|\nabla_{\bxi} L(\hat{\bxi})\|^2\le \epsilon^2
,
$$
where $\hat{\bxi}$ is the output after the algorithm's iteration number $T$, and the expectation is taken over the randomness in $\{\tau_i\}$ at each iteration.

To bound the norm of the gradient estimation $\|d_i(\bxi)\|$, we need assumptions on the norm of rewards $\|r(s, a)\|$ and the norm of gradient $\nabla_{\bxi} \log \pi_{\bxi}(a\mid s)$ as follows:

\begin{assumption}[Boundedness]
\label{assum:boundedness}
We assume that the reward and the gradient of $\log \pi_{\bxi}$ are bounded for any $a\in\cA$ and $s \in \cS$, and there exists a constant $R$ and a constant $M$ such that:
\begin{equation}
\|r(s, a)\| \le R, \qquad \|\nabla_{\bxi} \log \pi_{\bxi}(a\mid s)\|\le M
.
\end{equation}
\red{for any $a\in\cA, s \in \cS$.}
\end{assumption}

\begin{assumption}[Smoothness]
\label{assum:smoothness}
There exists a constant $N$ such that for any $a \in \cA$ and $s \in \cS$:
\begin{equation}
\|\nabla_{\bxi}^2 \log \pi_{\bxi}(a\mid s)\| \le N
.
\end{equation}
\end{assumption}

\begin{assumption}[Finite-variance]
\label{assum:finite_variance}
There exists a $\sigma \ge 0$ such that:
\begin{equation}
\text{Var}_{\tau_i\sim p(\cdot\mid\bxi)}(d_i(\bxi)) \le \sigma^2
.
\end{equation}
\end{assumption}

\begin{assumption}[Finite IS variance]
\label{assum:finite_IS_variance}
For $\bxi_1, \bxi_2 \in \RR^d$, use $w(\tau\mid\bxi_1, \bxi_2)$ to denote the importance sampling weight $p(\tau\mid\bxi_1)/p(\tau\mid\bxi_2)$. Then there exists a constant $\phi$ such that:
\begin{equation}
\text{Var}(w(\tau\mid\bxi_1, \bxi_2)) \le \phi^2
,
\end{equation}
where the variance is taken over $\tau \sim p(\cdot \mid \bxi_2)$.
\end{assumption}

%\pb
\section{Convergence Analysis}
\label{sec:conv}
In this section, we introduce the lemmas neccessary for proving the convergence results of our \AlgoName Algorithm and finally state our main theorem of convergence. 
We recall that our goal is to achieve an $\epsilon$-accurate solution of function $L(\bxi)$, whose gradient can be estimated unbiasedly by $d_i(\bxi)$.
First of all, given Assumptions~\ref{assum:boundedness} and~\ref{assum:smoothness}, we can derive the boundedness, Liptchizness of $d_i(\bxi)$ and the smoothness of $L(\bxi)$, which are necessary conditions for proving convergence of nonconvex stochastic optimization problems.
From the definition in equation~\eqref{eq:estimator}, $d_i(\bxi)$ can be written as a linear combination of $\nabla_{\bxi} \log \pi_{\bxi}(a_h\mid s_h)$:
\begin{equation}
\label{eq:L}
d_i(\bxi) = \sum_{h=0}^{H-1} \left(\sum_{t=h}^{H-1} \gamma^t r(s_t, a_t)\right) \nabla_{\bxi} \log \pi_{\bxi}(a_h\mid s_h)
.
\end{equation}
Similarily, $\nabla_{\bxi} d_i(\bxi)$ can be written as a linear combination of $\nabla^2_{\bxi} \log \pi_{\bxi}(a_h\mid s_h)$.
Using the fact that $\sum_{h=0}^{H-1}\sum_{t=h}^{H-1} \gamma^t \le 1/(1-\gamma)^2$ and the bound derived in Assumptions~\ref{assum:boundedness} and~\ref{assum:smoothness}, it is direct to see that
\begin{equation}
\label{eq:20}
\|d_i(\bxi)\| \le \frac{MR}{(1-\gamma)^2},\qquad \|\nabla_{\bxi} d_i(\bxi)\| \le \frac{NR}{(1-\gamma)^2}
.
\end{equation}
Equation~\eqref{eq:20} implies that if we define $L_d = \frac{NR}{(1-\gamma)^2}$, $\|d_i(\bxi_1) - d_i(\bxi_2)\| \le L_d \|\bxi_1 - \bxi_2\|$ and $L(\bxi)$ is $L_d$-smooth.
With the boundedness and smoothness results, we further estimate the accumulated estimation error $\sum_{t=0}^{T-1}\EE\|\bg_t - \nabla_{\bxi} L(\bxi_t)\|^2$. 
In Lemma~\ref{lem:bounded_IS_variance} below we establish the variance bound of the importance sampling weight:
\begin{lemma}[Lemma A.1 in \citep{xu2019sample}]
\label{lem:bounded_IS_variance}
Let Assumptions~\ref{assum:boundedness},~\ref{assum:smoothness} and~\ref{assum:finite_IS_variance} hold.
Use $w_h(\tau\mid\bxi_1, \bxi_2)$ to denote the importance sampling weight $p(\tau_h\mid\bxi_1)/p(\tau_h\mid\bxi_2)$. Then there exists a constant $C = h(2hM^2 + N)(\phi + 1)$ such that:
\begin{equation}
\text{Var}(w_h(\tau\mid\bxi_1, \bxi_2)) \le C\|\bxi_1 - \bxi_2\|^2
,
\end{equation}
where the trajectory $\tau_{h}$ is the trajectory generated following the distribution $p(\cdot \mid \bxi_2)$ and truncated up to time $h$.
The variance is taken over $\tau \sim p(\cdot \mid \bxi_2)$.
\end{lemma}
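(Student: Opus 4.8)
The plan is to bound the variance by a second-order Taylor estimate. Since $w_h(\tau\mid\bxi_1,\bxi_2)$ is a likelihood ratio, $\EE_{\tau\sim p(\cdot\mid\bxi_2)}[w_h(\tau\mid\bxi_1,\bxi_2)]=1$, so $\text{Var}(w_h(\tau\mid\bxi_1,\bxi_2))=\EE_{\tau\sim p(\cdot\mid\bxi_2)}[w_h(\tau\mid\bxi_1,\bxi_2)^2]-1$ and it suffices to bound this second moment. To that end I would fix $\bxi_2$ and study the function $g(\bxi):=\EE_{\tau\sim p(\cdot\mid\bxi_2)}\big[(p(\tau_h\mid\bxi)/p(\tau_h\mid\bxi_2))^2\big]$, which satisfies $g(\bxi_2)=1$; the target reduces to showing $g(\bxi_1)-g(\bxi_2)\le C\|\bxi_1-\bxi_2\|^2$.

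First I would verify $\nabla g(\bxi_2)=0$. Writing $\ell(\bxi):=\log p(\tau_h\mid\bxi)=\text{const}+\sum_{t=0}^{h-1}\log\pi_{\bxi}(a_t\mid s_t)$ (the transition and initial-state factors of $p(\tau_h\mid\bxi)$ do not depend on $\bxi$), differentiating under the expectation gives $\nabla g(\bxi)=2\,\EE_{\tau\sim p(\cdot\mid\bxi_2)}\big[(p(\tau_h\mid\bxi)/p(\tau_h\mid\bxi_2))^2\nabla\ell(\bxi)\big]$, which at $\bxi=\bxi_2$ equals $2\,\EE_{\tau\sim p(\cdot\mid\bxi_2)}[\nabla\log p(\tau_h\mid\bxi_2)]=0$ by the score-function identity. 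Taylor's theorem with the mean-value form of the remainder then yields a point $\bar\bxi$ on the segment joining $\bxi_1$ and $\bxi_2$ with $g(\bxi_1)-g(\bxi_2)=\tfrac12(\bxi_1-\bxi_2)^\top\nabla^2 g(\bar\bxi)(\bxi_1-\bxi_2)\le\tfrac12\|\nabla^2 g(\bar\bxi)\|\,\|\bxi_1-\bxi_2\|^2$.

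The next step is to bound the Hessian uniformly. Differentiating once more, $\nabla^2 g(\bxi)=2\,\EE_{\tau\sim p(\cdot\mid\bxi_2)}\big[(p(\tau_h\mid\bxi)/p(\tau_h\mid\bxi_2))^2\,(2\nabla\ell(\bxi)\nabla\ell(\bxi)^\top+\nabla^2\ell(\bxi))\big]$. By Assumption~\ref{assum:boundedness}, $\|\nabla\ell(\bxi)\|=\|\sum_{t=0}^{h-1}\nabla\log\pi_{\bxi}(a_t\mid s_t)\|\le hM$, and by Assumption~\ref{assum:smoothness}, $\|\nabla^2\ell(\bxi)\|\le hN$; hence $\|2\nabla\ell\nabla\ell^\top+\nabla^2\ell\|\le 2h^2M^2+hN=h(2hM^2+N)$, so $\|\nabla^2 g(\bxi)\|\le 2h(2hM^2+N)\,g(\bxi)$ for every $\bxi$.

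Finally I would control $g$ along the segment, which is the only place Assumption~\ref{assum:finite_IS_variance} is needed: factoring the full-horizon weight as $w(\tau\mid\bar\bxi,\bxi_2)=w_h(\tau\mid\bar\bxi,\bxi_2)\cdot w_{h:H}(\tau\mid\bar\bxi,\bxi_2)$ and using the tower property (the trailing factor is a likelihood ratio with conditional expectation $1$ given the first $h$ steps), one gets $g(\bar\bxi)=\EE_{\tau\sim p(\cdot\mid\bxi_2)}[w_h(\tau\mid\bar\bxi,\bxi_2)^2]\le\EE_{\tau\sim p(\cdot\mid\bxi_2)}[w(\tau\mid\bar\bxi,\bxi_2)^2]=1+\text{Var}(w(\tau\mid\bar\bxi,\bxi_2))\le 1+\phi^2$. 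Combining the three displays gives $\text{Var}(w_h(\tau\mid\bxi_1,\bxi_2))=g(\bxi_1)-1\le h(2hM^2+N)(1+\phi^2)\|\bxi_1-\bxi_2\|^2$, i.e.\ the claim with a constant of the stated form (the gap between $1+\phi^2$ and $\phi+1$ being immaterial for a global constant). I expect the main obstacle to be precisely this control of $g$ on the segment: the bound $\|\nabla^2 g(\bxi)\|\le 2h(2hM^2+N)g(\bxi)$ alone closes only through a Gronwall argument whose constant would grow exponentially in the horizon, and it is Assumption~\ref{assum:finite_IS_variance} together with the conditioning trick that keeps the dependence polynomial; the two interchanges of differentiation and expectation and the score identity are routine under the uniform bounds of Assumptions~\ref{assum:boundedness} and~\ref{assum:smoothness}.
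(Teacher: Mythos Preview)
The paper does not supply its own proof of this lemma; it simply cites \citep{xu2019sample} and moves on. So there is no in-paper argument to compare against, and your proposal should be judged on its own terms.

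Your argument is sound. The reduction $\text{Var}(w_h)=g(\bxi_1)-1$, the vanishing of $\nabla g(\bxi_2)$ via the score identity, the one-dimensional Taylor expansion along the segment (applied to $t\mapsto g(\bxi_2+t(\bxi_1-\bxi_2))$), and the Hessian bound $\|\nabla^2 g(\bxi)\|\le 2h(2hM^2+N)\,g(\bxi)$ are all correct under Assumptions~\ref{assum:boundedness} and~\ref{assum:smoothness}. The key step---controlling $g(\bar\bxi)$ uniformly on the segment---is also correct: writing $w=w_h\cdot w_{h:H}$, the conditional expectation $\EE[w_{h:H}\mid\tau_h]=1$ together with Jensen gives $\EE[w_{h:H}^2\mid\tau_h]\ge 1$, hence $g(\bar\bxi)=\EE[w_h^2]\le\EE[w_h^2 w_{h:H}^2]=\EE[w^2]\le 1+\phi^2$ by Assumption~\ref{assum:finite_IS_variance}. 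This is exactly the Taylor-expansion route used in the cited reference, so your approach coincides with the intended one. The resulting constant $h(2hM^2+N)(1+\phi^2)$ versus the stated $h(2hM^2+N)(\phi+1)$ is, as you note, an inconsequential discrepancy for a global constant (and the downstream analysis in the paper only uses the existence of such a $C$ through $C_\gamma$).
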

The proof of Lemma~\ref{lem:bounded_IS_variance} can be found in \citep{xu2019sample}.
Combining Lemma~\ref{lem:bounded_IS_variance} and Equation~\eqref{eq:L}, we get the following bound of difference between two consecutive estimations:

\begin{lemma}
\label{lem:diff}
$$
\EE \|d_i^{\bxi_{t+1}}(\bxi_t) - d_i(\bxi_{t+1})\|^2 \le C_\gamma \|\bxi_{t+1} - \bxi_t\|^2
,
$$
where $C_\gamma$ is a constant depending on $\gamma$.% \red{To be calculated later}\blue{specify $C_\gamma$ later}
\end{lemma}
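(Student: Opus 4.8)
The plan is to expand both estimators over the episode index $h$, pair up the terms, and split each per-$h$ discrepancy into a \emph{same-weight, different-parameter} piece and a \emph{weight-deviation} piece, each of which is controlled by one ingredient already available. Recall that
$d_i^{\bxi_{t+1}}(\bxi_t) = \sum_{h=0}^{H-1} w_h(\tau\mid\bxi_t,\bxi_{t+1})\, d_{i,h}(\bxi_t)$ and $d_i(\bxi_{t+1}) = \sum_{h=0}^{H-1} d_{i,h}(\bxi_{t+1})$, where in both sums the trajectory $\tau$ is drawn from $p(\cdot\mid\bxi_{t+1})$, $w_h$ is the truncated importance weight as in Lemma~\ref{lem:bounded_IS_variance}, and $d_{i,h}(\bxi')$ is the GPOMDP summand evaluated at parameter $\bxi'$ along that trajectory. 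I would write the per-$h$ term as $w_h d_{i,h}(\bxi_t) - d_{i,h}(\bxi_{t+1}) = w_h\big(d_{i,h}(\bxi_t) - d_{i,h}(\bxi_{t+1})\big) + (w_h - 1)\,d_{i,h}(\bxi_{t+1})$, then apply $\|\sum_h v_h\|^2 \le H\sum_h\|v_h\|^2$ together with $\|a+b\|^2\le 2\|a\|^2+2\|b\|^2$, so that it suffices to bound, for each $h$, the two quantities $\EE\big[w_h^2\,\|d_{i,h}(\bxi_t)-d_{i,h}(\bxi_{t+1})\|^2\big]$ and $\EE\big[(w_h-1)^2\,\|d_{i,h}(\bxi_{t+1})\|^2\big]$.

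For the first quantity I would use the \emph{deterministic}, per-trajectory Lipschitz bound on the GPOMDP summand coming from the linear-combination representation \eqref{eq:L} and Assumption~\ref{assum:smoothness}, namely $\|d_{i,h}(\bxi_t)-d_{i,h}(\bxi_{t+1})\| \le \ell_h\,\|\bxi_{t+1}-\bxi_t\|$ pointwise with $\ell_h$ of order $\gamma^h NR$; pulling this factor out leaves $\ell_h^2\,\|\bxi_{t+1}-\bxi_t\|^2\,\EE[w_h^2]$, and since $\EE[w_h]=1$ we have $\EE[w_h^2] = 1 + \mathrm{Var}(w_h) \le 1+\phi^2$ by Assumption~\ref{assum:finite_IS_variance} (in its truncated form). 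For the second quantity I would use the pointwise bound $\|d_{i,h}(\bxi_{t+1})\| \le m_h$ with $m_h$ of order $\gamma^h MR$ (again from \eqref{eq:L} and Assumption~\ref{assum:boundedness}), and then Lemma~\ref{lem:bounded_IS_variance} gives $\EE[(w_h-1)^2] = \mathrm{Var}(w_h) \le C_h\,\|\bxi_{t+1}-\bxi_t\|^2$ with $C_h = h(2hM^2+N)(\phi+1)$. Summing both contributions over $h=0,\dots,H-1$, the $\gamma$-geometric factors collapse exactly as in the derivation of \eqref{eq:20} via $\sum_h\sum_{t\ge h}\gamma^t \le (1-\gamma)^{-2}$, and every remaining factor ($H$, $M$, $N$, $R$, $\phi$) is a global constant, so the overall bound has the claimed form $C_\gamma\,\|\bxi_{t+1}-\bxi_t\|^2$ with $C_\gamma$ depending on $\gamma$ through the $(1-\gamma)^{-2}$ (or $(1-\gamma)^{-4}$) factors.

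The step I expect to be the main obstacle is the first quantity: $w_h$ and $d_{i,h}(\bxi_t)-d_{i,h}(\bxi_{t+1})$ are both functions of the \emph{same} random trajectory, so the expectation of their product does not factor. The resolution is precisely to avoid using the $L_d$-Lipschitz bound in expectation and instead apply it trajectory-by-trajectory, which reduces the task to estimating $\EE[w_h^2]$ alone. A secondary point requiring care is bookkeeping of the sampling distribution: all expectations here are over $\tau\sim p(\cdot\mid\bxi_{t+1})$, and it is the \emph{truncated}-weight variance bound (Lemma~\ref{lem:bounded_IS_variance}), not the full-trajectory one, that must be invoked for the $(w_h-1)$ term.
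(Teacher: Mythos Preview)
Your approach is correct and relies on the same two ingredients as the paper (the truncated-weight variance bound of Lemma~\ref{lem:bounded_IS_variance} and the per-trajectory Lipschitz bound on the estimator coming from Assumption~\ref{assum:smoothness}), but the decomposition is different. The paper splits at the \emph{global} level,
\[
d_i^{\bxi_{t+1}}(\bxi_t) - d_i(\bxi_{t+1})
\;=\;
\bigl[d_i^{\bxi_{t+1}}(\bxi_t) - d_i(\bxi_t)\bigr]
+
\bigl[d_i(\bxi_t) - d_i(\bxi_{t+1})\bigr],
\]
so the Lipschitz-controlled piece $d_i(\bxi_t)-d_i(\bxi_{t+1})$ carries \emph{no} importance weight and is bounded directly by $L_d^2\|\bxi_t-\bxi_{t+1}\|^2$; only the first piece, which equals $\sum_h(w_h-1)\,d_{i,h}(\bxi_t)$, invokes Lemma~\ref{lem:bounded_IS_variance}. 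By contrast, your per-$h$ split $w_h\,d_{i,h}(\bxi_t)-d_{i,h}(\bxi_{t+1})=w_h\bigl(d_{i,h}(\bxi_t)-d_{i,h}(\bxi_{t+1})\bigr)+(w_h-1)\,d_{i,h}(\bxi_{t+1})$ keeps $w_h$ on the Lipschitz term, which forces the additional step of bounding $\EE[w_h^2]$. That step is legitimate (either via the martingale identity $w_h=\EE[w\mid\tau_h]$ together with Assumption~\ref{assum:finite_IS_variance}, or via Lemma~\ref{lem:bounded_IS_variance}, which gives $\EE[w_h^2]\le 1+C_h\|\bxi_t-\bxi_{t+1}\|^2$), but it is extra work the paper's decomposition sidesteps. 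In exchange, your version treats every $h$ uniformly, whereas the paper's version is shorter and yields a slightly cleaner constant.
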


\red{Lemma~\ref{lem:diff} shows that the expected squared error between $d_i^{\bxi_{t+1}}(\bxi_t)$ and $d_i(\bxi_{t+1})$ is bounded by the squared distance between $\bxi_{t}$ and $\bxi_{t+1}$ by a constant dependent of $\gamma$ while independent of $H$. The specific choice of $C_{\gamma}$ and the proof of Lemma~\ref{lem:diff} can be found in Appendix~\ref{sec:append:lem:diff}.}

To estimate the estimation error $\EE\|\bg_t - \nabla_{\bxi} L(\bxi_t)\|^2$, we recursively calculate the relation between $\EE\|\bg_{t+1} - \nabla_{\bxi} L(\bxi_{t+1})\|^2$ and $\EE\|\bg_t - \nabla_{\bxi} L(\bxi_t)\|^2$ by bringing in the recursive definition of $\bg_{t+1}$ in Equation~\eqref{eq:storm}. The result is shown in Lemma~\ref{lem:one_recursive} below:

\begin{lemma}
\label{lem:one_recursive}
Let Assumption~\ref{assum:boundedness},~\ref{assum:smoothness},~\ref{assum:finite_variance} and~\ref{assum:finite_IS_variance} hold. Suppose that $\bg_t$ and $\bxi_t$ are the iteration sequence as defined in Algorithm~\ref{alg:hybrid_sarah} at time $t$. $L(\bxi)$ is the objective function to be optimized. Then the estimation error can be bounded by
\begin{equation}
\label{eq:one_recursive}
\begin{aligned}
& \quad
\EE \|\bg_{t+1} - \nabla_{\bxi} L(\bxi_{t+1})\|^2 
\\&\le
(1 - \discount)^2  \EE\|\bg_t - \nabla_{\bxi} L(\bxi)\|^2 
\\&+ 
\frac{2\eta^2}{B}(1 - \discount)^2 C_\gamma^2  \EE\|\bg_t\|^2+ \frac{2 \discount^2 \sigma^2}{B}
.
\end{aligned}
\end{equation}
\end{lemma}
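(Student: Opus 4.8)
The plan is to expand the recursive update~\eqref{eq:update}, isolate its $\cF_t$-measurable part, and treat the remainder as a minibatch average of conditionally mean-zero, conditionally i.i.d.\ terms. Note first that $\bxi_{t+1} = \bxi_t + \eta\bg_t$ is $\cF_t$-measurable. Rearranging~\eqref{eq:update} yields
\[
\bg_{t+1} - \nabla_{\bxi} L(\bxi_{t+1}) = (1-\discount)\Delta_t + \frac{1}{B}\sum_{i\in\cB} Z_i,
\]
where $\Delta_t := \bg_t - \nabla_{\bxi} L(\bxi_t)$ and $Z_i := d_i(\bxi_{t+1}) - (1-\discount)d_i^{\bxi_{t+1}}(\bxi_t) - \big[\nabla_{\bxi} L(\bxi_{t+1}) - (1-\discount)\nabla_{\bxi} L(\bxi_t)\big]$. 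The two unbiasedness facts $\EE[d_i(\bxi_{t+1})\mid\cF_t] = \nabla_{\bxi} L(\bxi_{t+1})$ (fresh trajectories drawn from $p(\cdot\mid\bxi_{t+1})$) and $\EE[d_i^{\bxi_{t+1}}(\bxi_t)\mid\cF_t] = \nabla_{\bxi} L(\bxi_t)$ (the importance-sampling identity) give $\EE[Z_i\mid\cF_t] = 0$. Since $(1-\discount)\Delta_t$ is $\cF_t$-measurable, the cross term vanishes under $\EE[\cdot\mid\cF_t]$, so $\EE\|\bg_{t+1}-\nabla_{\bxi} L(\bxi_{t+1})\|^2 = (1-\discount)^2\EE\|\Delta_t\|^2 + \EE\|\tfrac1B\sum_{i\in\cB}Z_i\|^2$; and because distinct minibatch trajectories are conditionally independent with conditional mean zero, $\EE\big[\|\tfrac1B\sum_{i\in\cB}Z_i\|^2\mid\cF_t\big] = \tfrac1B\,\EE[\|Z_i\|^2\mid\cF_t]$.

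It then remains to bound $\EE[\|Z_i\|^2\mid\cF_t]$. I would split $Z_i = (1-\discount)P_i + \discount Q_i$ with $P_i := d_i(\bxi_{t+1}) - d_i^{\bxi_{t+1}}(\bxi_t) - \big(\nabla_{\bxi} L(\bxi_{t+1}) - \nabla_{\bxi} L(\bxi_t)\big)$ and $Q_i := d_i(\bxi_{t+1}) - \nabla_{\bxi} L(\bxi_{t+1})$, and apply $\|a+b\|^2 \le 2\|a\|^2 + 2\|b\|^2$; this is where the coefficient $2$ in~\eqref{eq:one_recursive} comes from. For $P_i$: it is the variable $d_i(\bxi_{t+1}) - d_i^{\bxi_{t+1}}(\bxi_t)$ with its $\cF_t$-conditional mean removed, hence its conditional second moment is at most $\EE[\|d_i(\bxi_{t+1}) - d_i^{\bxi_{t+1}}(\bxi_t)\|^2\mid\cF_t]$, which by Lemma~\ref{lem:diff} is bounded by $C_\gamma^2\|\bxi_{t+1}-\bxi_t\|^2 = C_\gamma^2\eta^2\|\bg_t\|^2$. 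For $Q_i$: $\EE[\|Q_i\|^2\mid\cF_t] = \text{Var}_{\tau_i\sim p(\cdot\mid\bxi_{t+1})}(d_i(\bxi_{t+1})) \le \sigma^2$ by Assumption~\ref{assum:finite_variance}. Putting these together gives $\EE[\|Z_i\|^2\mid\cF_t] \le 2(1-\discount)^2 C_\gamma^2\eta^2\|\bg_t\|^2 + 2\discount^2\sigma^2$, and taking total expectations throughout reproduces exactly~\eqref{eq:one_recursive}.

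The rearrangement algebra and the two second-moment estimates are routine; the step needing the most care is the conditioning bookkeeping. One must fix $\cF_t$ so that $\bxi_{t+1}$, $\bg_t$, and the importance weights $p(\tau_h\mid\bxi_t)/p(\tau_h\mid\bxi_{t+1})$ are all measurable, confirm that the trajectories sampled at iteration $t+1$ are conditionally i.i.d.\ given $\cF_t$, and use importance-sampling unbiasedness in precisely the right places so that (i) the cross term against $(1-\discount)\Delta_t$ is annihilated and (ii) the bias-correction term inside $P_i$ is subtracted off before Lemma~\ref{lem:diff} is invoked. Choosing the grouping $\bg_{t+1} - \nabla_{\bxi} L(\bxi_{t+1}) = (1-\discount)\Delta_t + \tfrac1B\sum_i Z_i$ — as opposed to any alternative that leaves a non-vanishing correlation between the $\cF_t$-measurable part and the noise — is the real crux of the argument.
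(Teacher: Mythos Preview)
Your proposal is correct and follows essentially the same argument as the paper: isolate the $\cF_t$-measurable piece $(1-\discount)(\bg_t-\nabla_{\bxi}L(\bxi_t))$, kill the cross term by conditional unbiasedness, split the remaining mean-zero minibatch term into the $(1-\discount)$ and $\discount$ parts via $\|a+b\|^2\le 2\|a\|^2+2\|b\|^2$, apply the $1/B$ variance reduction, then invoke Lemma~\ref{lem:diff} and Assumption~\ref{assum:finite_variance} respectively. The only cosmetic difference is that you package the noise as $Z_i=(1-\discount)P_i+\discount Q_i$ and reduce by $1/B$ before splitting, whereas the paper writes out the two pieces separately and applies $1/B$ to each; the bounds and constants are identical.
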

The above lemma shows that the estimation error between $\bg_{t+1}$ and $\nabla_{\bxi} L(\bxi_{t+1})$ can be bounded by $(1 - \discount)^2$ times the estimation error of the previous iteration $\bg_t - \nabla_{\bxi} L(\bxi_t)$ plus a factor of the norm of $\|\bg_t\|^2$ plus a variance controlling term.

%The proof of Lemma~\ref{lem:one_recursive} is deferred to Supplementary Material.% \blue{ref append}.

Lemma~\ref{lem:recursive} follows Lemma~\ref{lem:one_recursive} and is the main ingredients of proving the main theorem:
\begin{lemma}
\label{lem:recursive}
Let Assumption~\ref{assum:boundedness},~\ref{assum:smoothness},~\ref{assum:finite_variance} and~\ref{assum:finite_IS_variance} hold. Then the accumulated sum of expected estimation error $\sum_{t=0}^{T-1}\EE\|\bg_t - \nabla_{\bxi} L(\bxi_t)\|^2$ satisfies the following inequality:
\begin{equation}
\label{eq:recursive}
\begin{aligned}
&\quad \sum_{t=0}^{T-1} \EE\|\bg_t - \nabla_{\bxi} L(\bxi_t)\|^2
\\&\le
\frac{2}{\discount}\left[\frac{C_\gamma^2 \eta^2}{B} \sum_{t=0}^{T-1} \EE\|\bg_t\|^2
+
\frac{T \discount^2 \sigma^2}{B}
\right.
\\&+\left.
\EE\left[\|\bg_0 - \nabla_{\bxi} L(\bxi_0)\|^2\right]\right]
.
\end{aligned}
\end{equation}
\end{lemma}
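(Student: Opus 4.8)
The plan is to telescope the one-step recursion from Lemma~\ref{lem:one_recursive}. Write $e_t := \EE\|\bg_t - \nabla_{\bxi} L(\bxi_t)\|^2$ for brevity. Lemma~\ref{lem:one_recursive} gives
\[
e_{t+1} \le (1-\discount)^2 e_t + \frac{2\eta^2}{B}(1-\discount)^2 C_\gamma^2 \EE\|\bg_t\|^2 + \frac{2\discount^2\sigma^2}{B}.
\]
First I would unroll this inequality from $t$ down to $0$, so that $e_t \le (1-\discount)^{2t} e_0 + \sum_{j=0}^{t-1}(1-\discount)^{2(t-1-j)}\big[\tfrac{2\eta^2}{B}(1-\discount)^2 C_\gamma^2 \EE\|\bg_j\|^2 + \tfrac{2\discount^2\sigma^2}{B}\big]$. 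Then I would sum over $t=0,\dots,T-1$ and swap the order of the double summation. For each fixed index $j$, the geometric factors $(1-\discount)^{2(t-1-j)}$ (or $(1-\discount)^{2t}$ for the $e_0$ term) sum over the remaining values of $t$ to at most $\sum_{k\ge 0}(1-\discount)^{2k} = \frac{1}{1-(1-\discount)^2} = \frac{1}{\discount(2-\discount)} \le \frac{1}{\discount}$, since $\discount \in (0,1)$ implies $2-\discount \ge 1$.

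Carrying this out, the $e_0$ contribution is bounded by $\frac{1}{\discount}e_0 \le \frac{2}{\discount}\EE\|\bg_0 - \nabla_{\bxi} L(\bxi_0)\|^2$; the $\EE\|\bg_j\|^2$ terms contribute at most $\frac{1}{\discount}\cdot\frac{2\eta^2}{B}(1-\discount)^2 C_\gamma^2 \sum_{j=0}^{T-1}\EE\|\bg_j\|^2 \le \frac{2}{\discount}\cdot\frac{C_\gamma^2\eta^2}{B}\sum_{t=0}^{T-1}\EE\|\bg_t\|^2$ (dropping the harmless $(1-\discount)^2 \le 1$ factor); and the constant variance terms contribute at most $\frac{1}{\discount}\cdot\frac{2\discount^2\sigma^2}{B}\cdot T = \frac{2}{\discount}\cdot\frac{T\discount^2\sigma^2}{B}$. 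Adding the three pieces gives exactly the claimed bound \eqref{eq:recursive}.

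I do not anticipate a genuine obstacle here — this is a standard geometric-series telescoping argument. The only point requiring mild care is the index bookkeeping when interchanging the two sums and checking that every geometric tail is bounded by $1/\discount$ uniformly in $j$ (rather than by the tighter $1/(\discount(2-\discount))$, which we discard for a cleaner constant). One should also note that $e_0 = \EE\|\bg_0 - \nabla_{\bxi} L(\bxi_0)\|^2$ appears with coefficient $\frac{1}{\discount}$ after the tail sum, and is then bounded by $\frac{2}{\discount}$ times the same quantity simply because $1 \le 2$, matching the form of the statement; no randomness in $\bg_0$ needs to be unpacked at this stage since it is handled separately when the lemma is applied in the main theorem.
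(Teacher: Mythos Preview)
Your proposal is correct. It differs in presentation from the paper's argument, though both rest on the same geometric decay in Lemma~\ref{lem:one_recursive}. You unroll the recursion explicitly, sum over $t$, interchange the order of summation, and bound each geometric tail by $\frac{1}{1-(1-\discount)^2}\le \frac{1}{\discount}$. The paper instead argues in one shot: it multiplies the target sum by $\discount$, rewrites $\discount\sum e_t = \sum e_t - (1-\discount)\sum e_t$, shifts the index in the first sum to $t=1,\dots,T$, uses $(1-\discount)\ge(1-\discount)^2$, and then applies Lemma~\ref{lem:one_recursive} termwise so that the difference $\sum_{t=1}^{T}e_t - (1-\discount)^2\sum_{t=0}^{T-1}e_t$ collapses directly to the two residual terms plus the boundary $e_0$. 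The paper's route avoids the double sum and the index bookkeeping you flagged; your route is more explicit and makes transparent exactly where the $1/\discount$ factor originates (and that it could in fact be the slightly sharper $1/(\discount(2-\discount))$). Either way the final inequality is the same.
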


\begin{remark}
We notice that in the proof of SARAH algorithm \citep{nguyen2017sarah} we have:
\begin{equation}
\label{eq:sarah_error}
\begin{aligned}
\EE\|\bg_{t+1} - \nabla_{\bxi} L(\bxi_{t+1})\|^2 
&\le \EE\|\bg_t  -\nabla_{\bxi} L(\bxi_t)\|^2
\\&\quad+ L^2\eta^2\EE\|\bg_{t}\|^2
,
\end{aligned}
\end{equation}
and
\begin{equation}
\label{eq:sarah_recursive}
\begin{aligned}
\sum_{t=0}^{T-1}\EE\|\bg_t - \nabla_{\bxi} L(\bxi_t)\|^2 
&\le L^2\eta^2\sum_{t=0}^{T-1}\sum_{s=1}^t\EE\|\bg_{s-1}\|^2 
\\&\le
 L^2\eta^2 T\sum_{t=0}^{T-1}\EE\|\bg_{t}\|^2
.
\end{aligned}
\end{equation}

Hence, to control the growth of function value, $\eta$ should be chosen with an order of $\cO(T^{-1/2})$. With infinitely increasing $T$, $\eta$ have to be chosen to be infinitely small. SARAH/SPIDER algorithm uses an restart machenism to remedy for this problem.
However in our \AlgoName Algorithm, by introducing a exponential moving average, we bring in a shrinkage term $(1 - \discount)^2$ on the accumulation speed of $\EE\|\bg_t\|^2$, allowing the order of $\sum_{t=0}^{T-1}\EE\|\bg_t - \nabla_{\bxi} L(\bxi_t)\|^2$ to decrease from $T$ to $\frac{1}{\discount}$.

For $\discount$, we only need to control $\discount \ge 4C_\gamma^2\eta^2$ so that $\eta$ is no longer related with $T$. This allows us to do continuous training without restarting the iterations.
\end{remark}

Next we come to our main theorem in this paper, which conclude that after $T$ iterations, the expected gradient norm satisfies a bound described below:

\begin{theorem}
\label{thm:main}
Let Assumptions~\ref{assum:boundedness}, ~\ref{assum:smoothness},~\ref{assum:finite_variance}and~\ref{assum:finite_IS_variance} hold.
When $\discount B \ge 4\eta^2 L_d^2$, the resulting point after $T$ iterates satisfies:
\begin{equation}
\label{eq:main}
\frac{1}{T} \sum_{t=0}^{T-1}\EE\|\nabla_{\bxi} L(\bxi_t)\|^2
 \le 
\frac{2\Delta}{\eta T} + \frac{2\discount \sigma^2}{B} + \frac{2}{T}\cdot \frac{\sigma^2}{S_0 \discount}
,
\end{equation}
where $\Delta := L(\bxi_0) - f^*$ is a constant representing the function value gap between the initialization and the optimal value $f^*$.
\end{theorem}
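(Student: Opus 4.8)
The plan is to pair the $L_d$-smoothness of $L$ (already extracted in~\eqref{eq:20}) with the accumulated estimation-error bound of Lemma~\ref{lem:recursive}, and then to use the condition $\discount B \ge 4\eta^2 L_d^2$ to make the spurious $\sum_t \EE\|\bg_t\|^2$ terms cancel. First I would write the one-step ascent inequality: since $\bxi_{t+1} = \bxi_t + \eta\bg_t$ and $L$ is $L_d$-smooth,
\[
L(\bxi_{t+1}) \ge L(\bxi_t) + \eta\langle \nabla_\bxi L(\bxi_t), \bg_t\rangle - \frac{L_d\eta^2}{2}\|\bg_t\|^2 ,
\]
and then apply the polarization identity $\langle a, b\rangle = \tfrac12\big(\|a\|^2 + \|b\|^2 - \|a-b\|^2\big)$ to the inner product. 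Summing over $t = 0, \ldots, T-1$, taking total expectation, and telescoping with $\EE[L(\bxi_T)] - L(\bxi_0) \le \Delta$ gives
\[
\frac{\eta}{2}\sum_{t=0}^{T-1}\EE\|\nabla_\bxi L(\bxi_t)\|^2 \le \Delta + \frac{\eta}{2}\sum_{t=0}^{T-1}\EE\|\bg_t - \nabla_\bxi L(\bxi_t)\|^2 - \Big(\frac{\eta}{2} - \frac{L_d\eta^2}{2}\Big)\sum_{t=0}^{T-1}\EE\|\bg_t\|^2 .
\]

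Next I would substitute Lemma~\ref{lem:recursive} for $\sum_t \EE\|\bg_t - \nabla_\bxi L(\bxi_t)\|^2$. This produces three contributions: a multiple of $\tfrac{\eta}{\discount}\cdot\tfrac{C_\gamma^2\eta^2}{B}\sum_t\EE\|\bg_t\|^2$, the variance term $\tfrac{\eta T\discount\sigma^2}{B}$, and the initialization term $\tfrac{\eta}{\discount}\EE\|\bg_0 - \nabla_\bxi L(\bxi_0)\|^2$. Combining the first of these with the $-(\tfrac{\eta}{2} - \tfrac{L_d\eta^2}{2})\sum_t\EE\|\bg_t\|^2$ already present, the condition $\discount B \ge 4\eta^2 L_d^2$ (together with $\eta$ small enough that $L_d\eta \le \tfrac12$, which I would state explicitly if not already implicit, and using $C_\gamma \le L_d$ in the spirit of the Lipschitz constant derived in~\eqref{eq:20}) makes the net coefficient of $\sum_t\EE\|\bg_t\|^2$ nonpositive, so that term is discarded.

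It then remains to bound the initialization error. Since $\bg_0 = \tfrac{1}{S_0}\sum_{i\in\cS_0} d_i(\bxi_0)$ is an average of $S_0$ i.i.d.\ unbiased estimators of $\nabla_\bxi L(\bxi_0)$, each with variance at most $\sigma^2$ by Assumption~\ref{assum:finite_variance}, we have $\EE\|\bg_0 - \nabla_\bxi L(\bxi_0)\|^2 \le \sigma^2/S_0$. Plugging this in, dividing through by $\eta T/2$, and collecting terms yields exactly~\eqref{eq:main}.

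The main obstacle is the bookkeeping in the cancellation step: one must simultaneously track the constants coming out of the smoothness inequality and out of Lemma~\ref{lem:recursive}, and it is precisely here that the parameter condition $\discount B \ge 4\eta^2 L_d^2$ does its work (ensuring $\eta$ need not shrink with $T$, as emphasized in the Remark). Once the $\sum_t\EE\|\bg_t\|^2$ terms vanish, the remainder is a routine telescoping argument plus a standard mini-batch variance estimate.
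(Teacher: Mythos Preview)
Your proposal is correct and follows essentially the same route as the paper: a one-step $L_d$-smoothness ascent inequality combined with the polarization identity, telescoping over $t$, substituting Lemma~\ref{lem:recursive}, using the step-size condition (together with $L_d\eta\le\tfrac12$) to kill the $\sum_t\EE\|\bg_t\|^2$ term, and finally the mini-batch variance bound $\EE\|\bg_0-\nabla_\bxi L(\bxi_0)\|^2\le\sigma^2/S_0$. One caveat worth noting: the theorem is stated with $\discount B\ge 4\eta^2 L_d^2$, but the cancellation actually requires $\discount B\ge 4\eta^2 C_\gamma^2$ (this is exactly what the paper invokes in its proof as well); your appeal to ``$C_\gamma\le L_d$'' is how you bridge the two, though neither constant obviously dominates the other from their definitions, so this identification is a notational convention of the paper rather than a derived inequality.
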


Choose $S_0 = \cO(\sigma^2\epsilon^{-2})$ and $B = \cO(\sigma^2\epsilon^{-1})$
In the theorem, the $\discount / B$ term can be controlled by letting $a$ to be proportional with $B/S_0$. Thus the third term is of order $\mathcal{O}(\frac{1}{T B})$ and the second term is of order $\mathcal{O}(\frac{1}{S_0})$.
If we choose $S_0 \discount = B$ and $\eta$ is of order $\cO(1)$, We have that after $T$ iterates, the algorithm reaches a point with expected gradient norm of order $\cO(\frac{1}{T} + \frac{\sigma^2}{S_0} + \frac{\sigma^2}{TB})$. Compared with $\cO(\frac{1}{T} + \frac{\sigma^2}{S_0} + \frac{1}{B})$ in \citep{pmlr-v80-papini18a} and $\cO(\frac{1}{T} + \frac{\sigma^2}{S_0})$ in \citet{xu2019sample}.
However, The sample complexity in \citet{xu2019sample} is $\sqrt{T}S_0 + TB$ while in our algorithm is $S_0 + TB$, which makes the algorithm converges faster.

The detailed analysis of the convergence rate is shown in the next section. %Corollary~\ref{corr:main}.
Corollary~\ref{corr:main} is a direct result after Theorem~\ref{thm:main}. By controlling the estimated gradient to be in the $\epsilon$-neighborhood of 0, and minimizing $S_0$, we get the IFO complexity bound of \AlgoName algorithm:
\begin{corollary}
\label{corr:main}
Let Assumptions~\ref{assum:boundedness}, ~\ref{assum:smoothness},~\ref{assum:finite_variance} and~\ref{assum:finite_IS_variance} hold. Choose $\eta = \frac{\varepsilon}{2\sqrt{6}\sigma L_d}$, $\discount = \frac{\varepsilon^2}{6\sigma^2}$, and $S_0 = \frac{2\sqrt{3}}{3}\sigma^2\varepsilon^{-2}$. 
The IFO complexity of achieving an $\epsilon$-accurate solution is $\cO(\Delta L_d \cdot \frac{\sigma}{\epsilon^3} + \frac{\sigma^2}{\epsilon^2})$.
\end{corollary}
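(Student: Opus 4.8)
The corollary is a bookkeeping consequence of Theorem~\ref{thm:main}: the plan is to (i) check that the stated choices of $\eta,\discount,S_0$ satisfy the feasibility hypothesis $\discount B \ge 4\eta^2 L_d^2$, (ii) plug them into the right-hand side of~\eqref{eq:main}, (iii) read off the number of iterations $T$ needed to drive the bound below $\epsilon^2$, and (iv) total the trajectory samples (IFO calls) consumed, which is $S_0 + TB$ since the algorithm draws $S_0$ trajectories once to form $\bg_0$ and $B$ trajectories per iteration. Throughout I would use that the output $\widetilde\bxi$ is drawn uniformly from $\{\bxi_t\}_{t=0}^{T-1}$, so that $\EE\|\nabla_{\bxi} L(\widetilde\bxi)\|^2 = \frac1T\sum_{t=0}^{T-1}\EE\|\nabla_{\bxi} L(\bxi_t)\|^2$ is exactly the quantity bounded in~\eqref{eq:main}; forcing that left-hand side to be at most $\epsilon^2$ is precisely the definition of an $\epsilon$-accurate solution.

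First I would verify feasibility. With $\eta = \epsilon/(2\sqrt6\,\sigma L_d)$ one gets $4\eta^2 L_d^2 = \epsilon^2/(6\sigma^2)$, which coincides with the chosen $\discount = \epsilon^2/(6\sigma^2)$. Hence the hypothesis $\discount B\ge 4\eta^2 L_d^2$ collapses to $B\ge 1$, so it suffices to take $B=\cO(1)$; this is exactly why $B$ need not be inflated with $1/\epsilon$ as in the restarting schemes, and it is what keeps the per-iteration cost constant.

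Next I would substitute into~\eqref{eq:main}. The first term becomes $2\Delta/(\eta T) = 4\sqrt6\,\Delta\sigma L_d/(\epsilon T)$; the second becomes $2\discount\sigma^2/B = \epsilon^2/(3B)$, which is $\cO(\epsilon^2)$ for any $B\ge 1$; and for the third, the product $S_0\discount = \tfrac{2\sqrt3}{3}\sigma^2\epsilon^{-2}\cdot\tfrac{\epsilon^2}{6\sigma^2}$ is a pure constant, so the term is $\cO(\sigma^2/T)$. Demanding that the first and third terms each be $\cO(\epsilon^2)$ forces $T = \Omega\!\big(\Delta\sigma L_d\,\epsilon^{-3}\big)$ and $T = \Omega\!\big(\sigma^2\epsilon^{-2}\big)$ respectively, so choosing $T = \cO\!\big(\Delta L_d\sigma\,\epsilon^{-3} + \sigma^2\epsilon^{-2}\big)$ makes $\EE\|\nabla_{\bxi} L(\widetilde\bxi)\|^2\le\epsilon^2$.

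Finally I would total the oracle cost: $S_0 + TB = \cO(\sigma^2\epsilon^{-2}) + \cO(1)\cdot\cO(\Delta L_d\sigma\epsilon^{-3}+\sigma^2\epsilon^{-2}) = \cO\!\big(\Delta L_d\,\tfrac{\sigma}{\epsilon^3} + \tfrac{\sigma^2}{\epsilon^2}\big)$, the claimed bound. The only genuinely delicate points are recognizing that the feasibility condition reduces to $B\ge 1$ (so $B=\cO(1)$ is admissible and does not blow up $TB$) and tracking the numerical constants carefully enough to confirm that $S_0\discount$ is constant, so that the third term imposes only an $\cO(\sigma^2\epsilon^{-2})$ requirement on $T$ rather than something larger; the remainder is direct substitution.
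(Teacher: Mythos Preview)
Your proposal is correct and follows essentially the same route as the paper: verify the feasibility constraint of Theorem~\ref{thm:main}, substitute the stated parameters into~\eqref{eq:main}, extract the induced lower bounds on $T$, and total the oracle cost as $S_0+TB$. The only cosmetic difference is that the paper's proof is phrased as an optimization over $S_0$ (deriving $S_0=\tfrac{2}{\sqrt3}\sigma^2\epsilon^{-2}$ by balancing $S_0$ against $\tfrac{4\sigma^4}{3S_0\epsilon^4}$), whereas you---appropriately, since $S_0$ is already fixed in the statement---simply plug it in and observe that $S_0\discount$ is a numerical constant; the substance is identical.
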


\begin{remark}
In the case of Gaussian policy \citep{xu2019sample}, one can also obtain an $O((1-\gamma)^{-4}\ep^{-3})$ trajectories sample upper bound to output an $\tilde{\bxi}$ such that $\EE\|\nabla_{\bxi} L(\tilde{\bxi}) \|^2 \le \epsilon^2$. 
We omit the detailed discussions and refer the reader to \citep{xu2019sample} for more details.
\end{remark}

\section{Proof of Main Results}
\label{sec:proof}
In this section, we prove the main results in this paper.
More auxiliary proofs are located in the supplementary section.

\subsection{Proof of Theorem~\ref{thm:main}}
\begin{proof}[Proof of Theorem~\ref{thm:main}]
By applying the $L_d$-smoothness of $L(\bxi)$ \red{\eqref{eq:20}}, we get a general estimation bound of $L(\bxi_{t+1})$:
\begin{equation}\begin{aligned}
\label{spiderbdd}
L(\bxi_{t+1}) &= L(\bxi_t + \eta \bg_t)
\\&\ge
L(\bxi_t) + \eta\langle \bg_t, \nabla_{\bxi} L(\bxi_t)\rangle - \frac{\eta^2 L_d}{2}\|\bg_t\|^2
\\&\overset{(a)}{\ge}
L(\bxi_t) + \left(\frac{\eta}{2} - \frac{\eta^2 L_d}{2}\right) \|\bg_t\|^2
\\&\quad
+ \frac{\eta}{2}\|\nabla_{\bxi} L(\bxi_t)\|^2 - \frac{\eta}{2}\|\bg_t - \nabla_{\bxi} L(\bxi_t)\|^2
\\&\overset{(b)}{\ge}
L(\bxi_t) + \frac{\eta}{4}\|\bg_t\|^2 + \frac{\eta}{2}\|\nabla_{\bxi} L(\bxi_t)\|^2
\\&\quad
- \frac{\eta}{2}\|\bg_t - \nabla_{\bxi} L(\bxi_t)\|^2
.
\end{aligned}
\end{equation}
In $(a)$ we apply the properties of inner product:
\begin{equation}
\begin{aligned}
&\quad \langle \bg_t, \nabla_{\bxi} L(\bxi_t)\rangle
\\&=
\frac{\|\bg_t\|^2}{2} + \frac{\|\nabla_{\bxi} L(\bxi_t)\|^2}{2} - \frac{\|\bg_t - \nabla_{\bxi} L(\bxi_t)\|^2}{2}
,
\end{aligned}
\end{equation}
and in $(b)$ we set $L_d \eta \le 1/2$.

Summing $L(\bxi_{t+1}) - L(\bxi_t)$ over $T$. We result in the inequality below:

\begin{equation}\label{eq:onestep}
\begin{split}
L(\bxi_T) - L(\bxi_0)
 &\le 
- \dfrac{\eta}{2} \displaystyle\sum_{t=0}^{T-1}\|\nabla_{\bxi} L(\bxi_t)\|^2 
- \frac{\eta}{4} \sum_{t=0}^{T-1}\|\bg_t\|^2 
\\&\quad
+ \frac{\eta}{2} \sum_{t=0}^{T-1}\|\bg_t - \nabla_{\bxi} L(\bxi_t)\|^2
.
\end{split}
\end{equation}

Since the LHS of \eqref{eq:onestep} is $\ge -\Delta$, taking its expectation along with \eqref{eq:recursive} in Lemma~\ref{lem:recursive} gives
$$\begin{aligned}
-\Delta
 &\le 
- \frac{\eta_t}{2} \sum_{t=0}^{T-1}\|\nabla_{\bxi} L(\bxi)\|^2 
-\frac{\eta}{4} \sum_{t=0}^{T- 1}\EE\|\bg_t\|^2 
\\&\quad
+ \frac{\eta}{2}\sum_{t=0}^{T-1}\EE\|\bg_t - \nabla_{\bxi} L(\bxi_t)\|^2
 \\&\le
- \frac{\eta}{2} \sum_{t=0}^{T-1}\|\nabla_{\bxi} L(\bxi_t)\|^2 
-\frac{\eta}{4} \sum_{t=0}^{T- 1}\EE\|\bg_t\|^2 
\\&\quad
+ \eta\cdot\frac{C_\gamma^2\eta^2}{\discount B} \sum_{t=0}^{T-1} \EE\|\bg_t\|^2
+\cdot \frac{T \discount \sigma^2\eta}{B}
+ \cdot \frac{\sigma^2\eta}{S_0 \discount}
\\&=
- \frac{\eta}{2} \sum_{t=0}^{T-1}\|\nabla_{\bxi} L(\bxi_t)\|^2 
\\&
- \frac{\eta}{4} \left( 1 - \frac{4C_\gamma^2\eta^2}{\discount B}\right) \sum_{t=0}^{T- 1}\EE\|\bg_t\|^2
\\&\quad
+\frac{ \eta\cdot T \discount \sigma^2}{B}
+ \eta\cdot \frac{\sigma^2}{S_0 \discount}
.
\end{aligned}$$

Our pick of $\eta$ satisfies $4C_\gamma^2\eta^2\le \discount B$ so $1 - \frac{4C_\gamma^2\eta^2}{\discount B} \ge 0$ and hence
\begin{equation}
\label{eq:mid}
\frac{\eta}{2} \sum_{t=0}^{T- 1}\EE\|\nabla_{\bxi} L(\bxi_t)\|^2
 \le 
\Delta + \frac{\eta\cdot T \discount \sigma^2}{B} + \eta\cdot \frac{\sigma^2}{S_0 \discount}
\end{equation}

Multiply both sides of Equation~\eqref{eq:mid} by $\frac{2}{\eta T}$:
$$
\frac{1}{T} \sum_{t=0}^{T-1}\EE\|\nabla_{\bxi} L(\bxi_t)\|^2
 \le 
\frac{2\Delta}{\eta T} + \frac{2\discount \sigma^2}{B} + \frac{2}{T}\cdot \frac{\sigma^2}{S_0 \discount}
,
$$
which completes our proof.
\end{proof}

\subsection{Proof of Corollary~\ref{corr:main}}
\begin{proof}[Proof of Corollary~\ref{corr:main}]

For choosing parameters of correct dependency over $\ep$, by Equation~\eqref{eq:main}, one requires:
\begin{equation}
\label{eq:limits}
2\discount\sigma^2 \le \frac{\ep^2 \cdot B}{3}
\qquad
\frac{2\Delta}{\eta T} \le \frac{\ep^2}{3}
\qquad
\frac{2}{T}\cdot \frac{\sigma^2}{S_0 \discount} \le \frac{\ep^2}{3}
\end{equation}
and we recall that previously we have a lower bound on $\discount$:
$
4C_\gamma^2\eta^2 \le \discount B
$.
So finally we choose
\begin{equation}
\label{eq:choose}
\discount = \frac{\ep^2 B}{6\sigma^2},
\qquad
\eta = \sqrt{\frac{\discount B}{4C_\gamma^2}}
= \frac{\ep B}{2\sqrt{6}\sigma C_\gamma}
.
\end{equation}

Bring Equation~\eqref{eq:choose} into Equation~\eqref{eq:limits} we have two lower bounds over $T$ to reach an $\epsilon$-accurate solution:
$$
T 
\ge 
\frac{2\Delta}{3\eta \ep^2} 
= 
\frac{2\Delta}{\ep^2}\cdot \frac{2\sqrt{6}\sigma C_\gamma}{3\ep B} 
= 
4 \sqrt{6}\Delta C_\gamma \cdot\frac{\sigma}{3\ep^3 B}
$$
and also
$$
T\ge 
\frac{2}{\ep^2}\cdot \frac{\sigma^2}{3S_0 \discount}
=
\frac{4\sigma^4}{3S_0 \ep^4 B}
.
$$

Our goal is to minimize the IFO complexity

\begin{align*}
&
\text{minimize}_{S_0}~ \text{IFO} = S_0 + TB
\\&s.t.~
T\ge  \max\left(4 \Delta L_d\cdot\frac{\sigma}{3\ep^3}, 4\frac{\sigma^4}{3S_0 \ep^4} \right)
\end{align*}
which is approximately equivalent to
$$
\text{minimize}_{S_0}~ S_0 + \frac{4\sigma^4}{3S_0 \ep^4}  + 4 \Delta L_d\cdot\frac{\sigma}{3\ep^3}
$$
Our best choice of $S_0$ is obviously $S_0 = 2\sigma^2 \ep^{-2}/\sqrt{3}$.
So the IFO complexity of reaching an $\epsilon$-accurate solution is
$$
\begin{aligned}
\text{IFO} 
&= 
\frac{2\sigma^2}{\sqrt{3}\ep^2} + \max\left(
4 \Delta L_d\cdot\frac{\sigma}{\ep^3}, \frac{2\sigma^2}{\sqrt{3}\ep^2} 
\right)
\\&\le
\frac{2\sigma^2}{\sqrt{3}\ep^2} + \left(
4 \Delta L_d\cdot\frac{\sigma}{\ep^3} + \frac{2\sigma^2}{\sqrt{3}\ep^2} 
\right)
\\&=
4 \Delta L_d\cdot\frac{\sigma}{\ep^3} + \frac{4\sigma^2}{\sqrt{3}\ep^2} 
.
\end{aligned}
$$
\end{proof}

\section{Experiments}
\label{sec:exp}

\begin{figure}[!tb]
\centering
\includegraphics[scale=0.5]{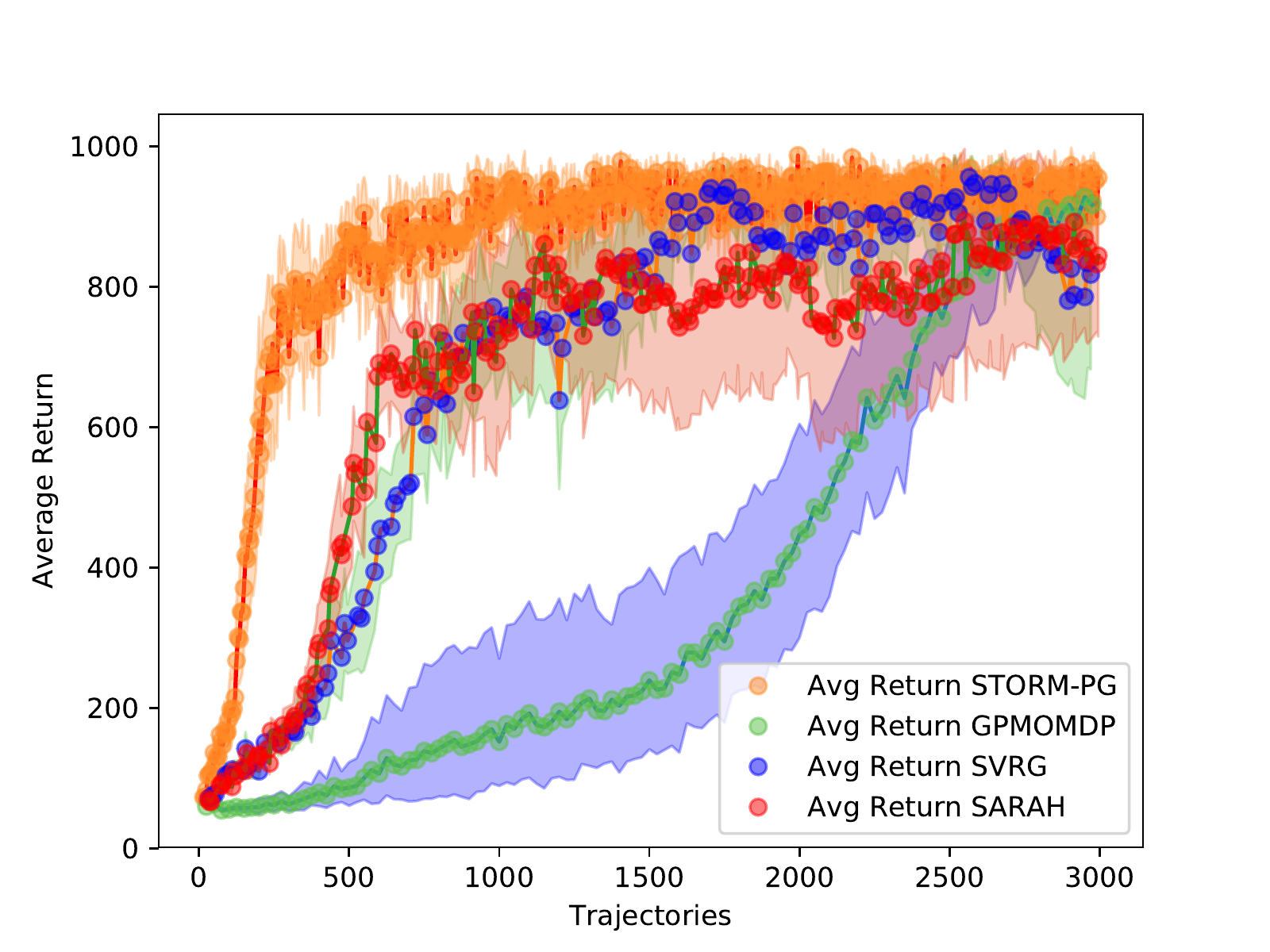}
\caption{A comparison between different policy gradient algorithms on Cart-Pole task. The $x$-axis is the trajectories sampled, the $y$-axis is the average return of the policy parameter.}
\label{fig:label}
\end{figure}

In this section, we design a set of experiments to validate the superiority of our STORM-PG Algorithm. Our implementation is based on the rllab library\footnote{\url{https://github.com/Dam930/rllab}} and the initial implementation of~\citet{pmlr-v80-papini18a}\footnote{\url{https://github.com/rll/rllab}}. We test the performance of our algorithms as well as the baseline algorithms on the Cart-Pole\footnote{\url{https://github.com/openai/gym/wiki/CartPole-v0}} environment and the Mountain-Car environment. 

For baseline algorithms, We choose GPOMDP~\citep{baxter2001infinite} and two variance-reduced policy gradient algorithms SVRPG~\citep{pmlr-v80-papini18a} and SRVRPG~\citep{xu2019sample}. The results and detailed experimental design are described as follows:

\subsection{Comparison of different Algorithms}
    
In SRVRPG \citep{xu2019sample} and SVRPG \citep{pmlr-v80-papini18a}, adjustable parameters include the large batch size $S_0$, the mini batch size $B$, the inner iteration number $m$ and the learning rate $\eta$. In \AlgoName Algorithm, we have to tune the large batch size $S_0$, the momentum factor $a$ and the learning rate $\eta$. Notice that we do not tune the mini batch size $B$ in STORM-PG, and fix it to be the same with the best $B$ tuned on SVRPG, as shown in the theory.

We adaptively choose the learning rate by adam optimizer and learning rate decay. The initial learning rate and decay discount are chosen between $(0.0001, 0.1)$ and $(0.5, 0.99)$ respectively. The environment related parameters: the discount factor $\gamma$ and the horizon $H$ varies according to tasks. We list the specific choice of $\gamma$, $H$, together with the  initial batch size $S_0$ and the inner batch size $B$ in the supplementary materials.
%In this section, we design a set of experiments to investigate the performance of our \AlgoName Algorithm. Our implementation is based on the rllab library and the implementation of the paper \citet{pmlr-v80-papini18a}\footnote{\url{https://github.com/rll/rllab}}\footnote{\url{https://github.com/Dam930/rllab}}. We test the performance of our algorithms on the CartPole\footnote{\url{https://github.com/openai/gym/wiki/CartPole-v0}}environment and compare the performance with the policy gradient baseline GPOMDP, the variance-reduced policy gradient baseline \citep{pmlr-v80-papini18a} and another variance-reduced policy gradient method based on SARAH/SPIDER \citep{xu2019sample}.

We use a Gaussian policy with a neural network with one hidden layer of size 64. For each algorithm in one environment, we choose ten best independent runs to collect the rewards and plot the confidence interval together with the average rewards at each iteration of the training process.

\paragraph{Cart-Pole environment}
The Cart-Pole environment describes the interaction of a pendulum pole attached to a cart. By pushing the cart leftward or rightward, a reward of +1 is obtained by keeping the pole upright and the episode ends when the cart or the pole is too far away from a given center.

Under this environment setting, figure~\ref{fig:label} shows the growth of the average return according to the training trajectories.

\begin{figure}[!tb]
\centering
\includegraphics[scale=0.5]{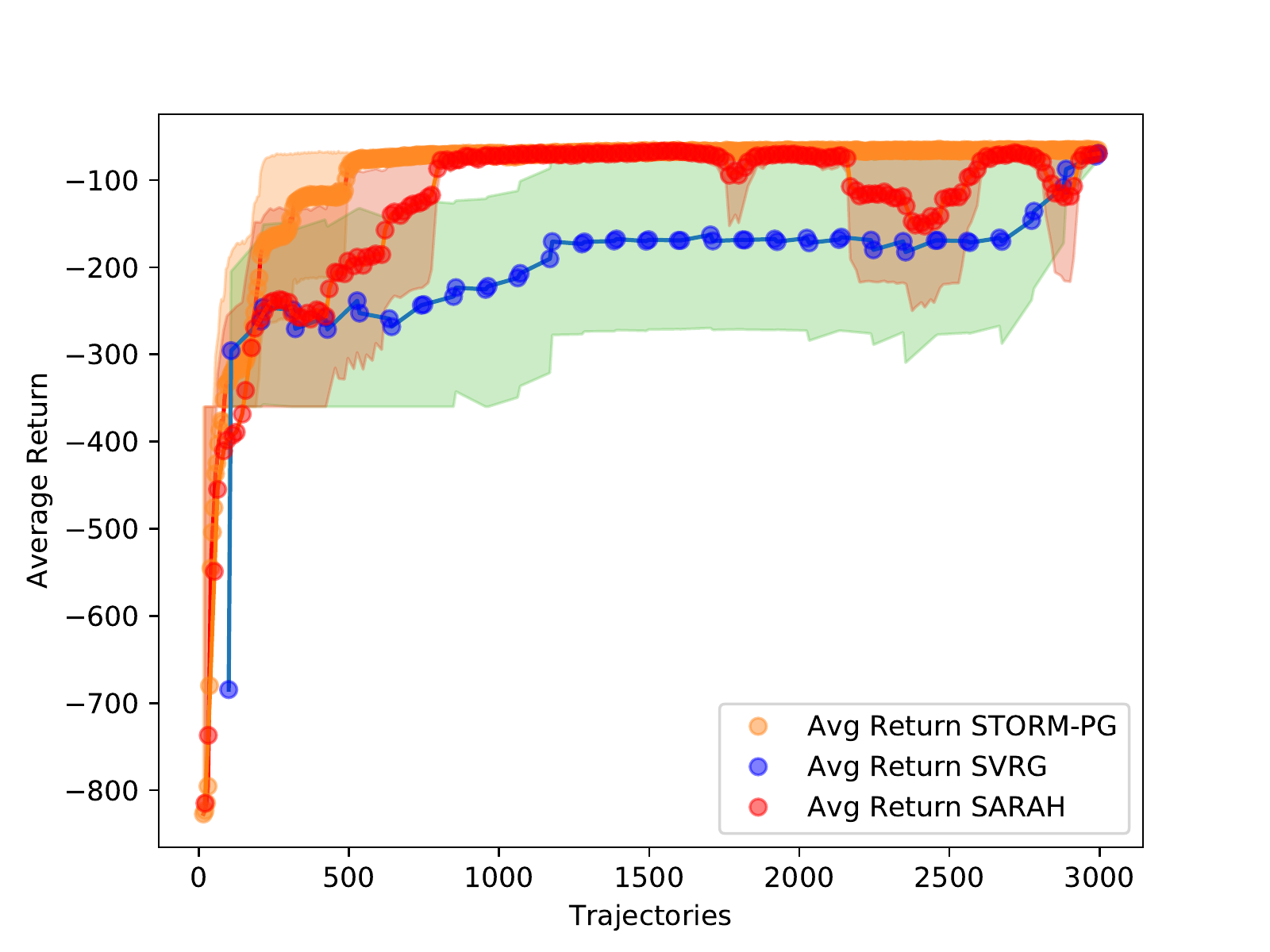}
\caption{A comparison between different policy gradient algorithms on Mountain-Car task. The $x$-axis is the trajectories sampled, the $y$-axis is the average return of the policy parameter.}
\label{fig:label2}
\end{figure}

From Figure~\ref{fig:label}, we see that our STORM-PG algorithm ourperforms other variance-reduced policy gradient methods in convergence speed. It reaches the maximum value at approximately 500 trajectories while SRVRPG and SVRPG reaches the maximum value at approximately 1500 trajectories. GPOMDP converges at about 3000 trajectories.

\paragraph{Mountain-Car Environment}
We use the Mountain Car environment provided in rllab environments. The task is to push a car to a certain position on a hill. The agent takes continuous actions to move leftward or rightward and gets a reward according to it's current position and height, every step it takes gets a penalty -1 and the episode ends when a target position is reached.

Figure~\ref{fig:label2} shows the growth of the average return according to the training trajectories. The GPOMDP algorithm in the Mountain-Car environment does not converge well. For illustrative purpose we only present the plot of the STORM-PG algorithm and two variance-reduced baselines.

From Figure~\ref{fig:label2}, we see that STORM-PG algorithm outperforms other baselines within the first 200 trajectories and reaches a stable zone within 600 trajectories, while for the algorithms it takes at least 1000 trajectories to reach a reasonable result. The two figures~\ref{fig:label} and~\ref{fig:label2} verifies our theory that our \AlgoName algorithm brings significant improvement to the policy gradient training.

Specifically, as we have mentioned at the beginning of Section~\ref{sec:exp}, previous variance-reduced policy gradient methods requires carefully tuning of the inner loop iteration number. SVRPG~\citep{pmlr-v80-papini18a} uses adaptive number of iterations while after tuning SRVRPG~\citep{xu2019sample} fixes a very small number of inner loops. 

On the contrary, we do not tune the mini batch size $B$. In practice, we fix both the initial batch $S_0$ and the mini batch $B$. The high stability with respect to hyper-parameters saves lots of efforts during the training process, The tolerance to the choice of parameters allows us to design a highly user friendly while efficient policy gradient algorithm. 

\section{Final Remarks}
\label{sec:final}
In this paper, we propose a new \AlgoName algorithm that adopts a recently proposed variance-reduced gradient method called STORM.
\AlgoName enjoys advantage both theoretically and experimentally.
From the final experimental results, our STORM-PG algorithm is significantly better than all other baseline methods, both in aspects of training stability and parameter tuning (the user time of tuning STORM-PG is much shorter).
The superiority of STORM-PG in experimental results over SVRPG breaks the curse that stochastic recursive gradient method, namely SARAH, often fails to outperform SVRG in practice even though it has better theoretical convergence rate.
Future works include proving the lower bounds of our algorithm and further improvement of the experimental performance on other statistical learning tasks.
We hope this work can inspire both reinforcement learning and optimization communities for future explorations.

% In the unusual situation where you want a paper to appear in the
% references without citing it in the main text, use \nocite
\nocite{langley00}

\bibliography{merged}
\bibliographystyle{icml2020}
\appendix
\onecolumn

\section{Proof of Auxillary Lemmas}
\subsection{Proof of Lemma~\ref{lem:recursive}}
\begin{proof}[Proof of Lemma~\ref{lem:recursive}]
$$\begin{aligned}
 &\quad
\discount \sum_{t=0}^{T-1} \EE\|\bg_t - \nabla_{\bxi} L(\bxi_t)\|^2
\\&=
\sum_{t=0}^{T-1} \EE\|\bg_t - \nabla_{\bxi} L(\bxi_t)\|^2
 - 
(1-\discount) \sum_{t=0}^{T-1} \EE\|\bg_t - \nabla_{\bxi} L(\bxi_t)\|^2
\\&=
\sum_{t=1}^{T} \EE\|\bg_t - \nabla_{\bxi} L(\bxi_t)\|^2
 - 
(1-\discount) \sum_{t=0}^{T-1} \EE\|\bg_t - \nabla_{\bxi} L(\bxi_t)\|^2
 - \EE\left[ \|\bg_T - \nabla_{\bxi} L(\bxi_T)\|^2  - \|\bg_0 - \nabla_{\bxi} L(\bxi_0)\|^2\right]
 \\&\overset{(a)}{\le}
\sum_{t=1}^{T} \EE\|\bg_t - \nabla_{\bxi} L(\bxi_t)\|^2
- 
(1-\discount)^2 \sum_{t=0}^{T-1} \EE\|\bg_t - \nabla_{\bxi} L(\bxi_t)\|^2
-
\EE\left[ \|\bg_T - \nabla_{\bxi} L(\bxi_T)\|^2  - \|\bg_0 - \nabla_{\bxi} L(\bxi_0)\|^2\right]
 \\&\overset{(b)}{\le}
\frac{2\eta^2}{B}(1-\discount)^2 C_\gamma^2 \sum_{t=0}^{T-1} \EE\|\bg_t\|^2
+
\frac{2T \discount^2 \sigma^2}{B}
+
\EE\left[\|\bg_0 - \nabla_{\bxi} L(\bxi_0)\|^2\right]
 \\&\le
\frac{2C_\gamma^2 \eta^2}{B} \sum_{t=0}^{T-1} \EE\|\bg_t\|^2
+
\frac{2T \discount^2 \sigma^2}{B}
+
2\EE\left[\|\bg_0 - \nabla_{\bxi} L(\bxi_0)\|^2\right]
.
\end{aligned}$$

In $(a)$ we used $\discount = (1-\discount) \ge (1-\discount)^2$ as $a \in [0, 1]$. $(b)$ is a direct result from Equation~\eqref{eq:one_recursive} in Lemma~\ref{lem:one_recursive}

\end{proof}

\subsection{Proof of Lemma~\ref{lem:one_recursive}}
\begin{proof}[Proof of Lemma~\ref{lem:one_recursive}]

In \AlgoName, as $\EE\|\bg_{t+1} - \nabla_{\bxi} L(\bxi_{t+1})\|^2 = \EE[\EE[\|\bg_{t+1} - \nabla_{\bxi} L(\bxi_{t+1})\|^2\mid \cF_t]] $, we could first take the conditional expectation of $\bg_{t+1} - \nabla_{\bxi} L(\bxi_{t+1})$over $\cF_t$, where $\cF_t$ is defined as the information before time $t$.

\begin{equation}\begin{aligned}
&\quad
\EE \left[\|\bg_{t+1} - \nabla_{\bxi} L(\bxi_{t+1})\|^2 \mid \cF_t\right]
\\&= 
\EE\left[\left\|(1 - \discount)\left(\bg_t + \frac{1}{B}\sum_{i\in \cB}\left[d_i(\bxi_{t+1}) - d_i^{\bxi_{t+1}}(\bxi_{t})\right]\right)\right.\right.
\left.\left.
+ \frac{1}{B}\sum_{i\in\cB}  \discount d_i(\bxi_{t+1}) - \nabla_{\bxi} L(\bxi_{t+1})\right\|^2\mid \cF_t\right]
\\&=
\EE [\|(1-\discount) (\bg_t - \nabla_{\bxi} L(\bxi_t))
+
 (1-\discount)\bigg[
\frac{1}{B}\sum_{i\in\cB}\left[d_i(\bxi_{t+1}) - d_i^{\bxi_{t+1}}(\bxi_{t})\right]
- \nabla_{\bxi} L(\bxi_{t+1}) + \nabla_{\bxi} L(\bxi_t)
\bigg] 
\\&\quad
+ \discount [\frac{1}{B}\sum_{i\in\cB}d_i(\bxi_{t+1}) -\nabla_{\bxi} L(\bxi_{t+1})]\|^2\mid \cF_t]
\\&=
(1-\discount)^2 \|\bg_t - \nabla_{\bxi} L(\bxi_t)\|^2 
+\frac{1}{B} 2\discount^2 \sigma^2
+\frac{1}{B} 2(1-\discount)^2 \EE\bigg[
\| d_i(\bxi_{t+1}) - d_i^{\bxi_{t+1}}(\bxi_{t})
 - \nabla_{\bxi} L(\bxi_{t+1}) + \nabla_{\bxi} L(\bxi_t) \|^2\mid\cF_t
\bigg]
\\&\overset{(a)}{\le}
(1-\discount)^2 \|\bg_t - \nabla_{\bxi} L(\bxi_t)\|^2 
+\frac{1}{B} 2\discount^2 \sigma^2
+ \frac{1}{B}2(1-\discount)^2 \EE\bigg[
\left\| d_i(\bxi_{t+1}) - d_i^{\bxi_{t+1}}(\bxi_{t})  \right\|^2\mid\cF_t
\bigg]
\\&\overset{(b)}{\le}
(1-\discount)^2 \|\bg_t - \nabla_{\bxi} L(\bxi_t)\|^2
+\frac{1}{B}2 \discount^2 \sigma^2
+\frac{1}{B} 2(1-\discount)^2 C_\gamma^2\EE\left[\|\bxi_{t+1} - \bxi_t\|^2\mid \cF_t\right]
\\&\overset{(c)}{\le}
(1-\discount)^2 \|\bg_t - \nabla_{\bxi} L(\bxi_t)\|^2 
+\frac{1}{B} 2\discount^2 \sigma^2
+ \frac{1}{B}2\eta^2(1-\discount)^2 C_\gamma^2\EE\|\bg_t\|^2
.
\end{aligned}\end{equation}

In the derivations above, $(a)$ comes from the fact that for a dummy random vector $Z$, $\EE\|Z - \EE(Z)\|^2 \le \EE \|Z\|^2$. $(b)$ is due to the $L$-smoothness of $f_i$, and $(c)$ is a direct result of the update rule of $\bxi_t$.

Taking expectation over all randomness in each iterations, we have
$$
\begin{aligned}
&\quad\EE\left[\|\bg_{t+1} - \nabla_{\bxi} L(\bxi_{t+1})\|^2\right]
\\&=
\EE\left[\EE\left[\|\bg_{t+1} - \nabla_{\bxi} L(\bxi_{t+1})\|^2\mid \cF_t\right]\right]
\\&\le
\bigg[
(1-\discount)^2 \EE\|\bg_t\| - \nabla_{\bxi} L(\bxi_t)\|^2
+ \frac{2\discount^2 \sigma^2}{B}
 + \frac{2\eta^2}{B}(1-\discount)^2 C_\gamma^2\EE\|\bg_t\|^2
\bigg]
\\&=
\bigg[
(1-\discount)^2 \EE\|\bg_t - \nabla_{\bxi} L(\bxi_t)\|^2
+ \frac{2\discount^2 \sigma^2}{B}
 +\frac{2 \eta^2}{B}(1-\discount)^2 C_\gamma^2\EE\|\bg_t\|^2
\bigg]
,
\end{aligned}
$$
which completes our proof.
\end{proof}

\subsection{Proof of Lemma~\ref{lem:diff}}
\label{sec:append:lem:diff}
\begin{proof}[Proof of Lemma~\ref{lem:diff}]
\begin{equation}
\begin{aligned}
&\quad \EE\|d_i^{\bxi_{t+1}}(\bxi_t) - d_i(\bxi_{t+1})\|^2
\\&\le
2\EE\|d_i^{\bxi_{t+1}}(\bxi_t) - d_i(\bxi_t)\|^2 + 2\EE\|d_i(\bxi_t) - d_i(\bxi_{t+1})\|^2
\\&=
2\EE\|\sum_{h=0}^{H-1}(w_h(\tau\mid \bxi_t, \bxi_{t+1}) - 1)d_{i,h}\|^2 +
2\EE\|d_i(\bxi_t) - d_i(\bxi_{t+1})\|^2
\\&=
2\EE\|\sum_{h=0}^{H-1}(w_h(\tau\mid \bxi_t, \bxi_{t+1}) - 1)\left(\sum_{t=0}^h \nabla \log \pi_{\bxi}(a_t \mid s_t) \right)\gamma^h r(s_h, a_h)\|^2 +
2\EE\|d_i(\bxi_t) - d_i(\bxi_{t+1})\|^2
\\&\le
\sum_{h=0}^{H-1}2\EE\|(w_h(\tau\mid \bxi_t, \bxi_{t+1}) - 1)\|^2 h^2M^2R^2\gamma^{2h}+
2L_d \EE\|\bxi_t - \bxi_{t+1}\|^2
\\&\le
\sum_{h=0}^{H-1}2C_w^2 h^2M^2R^2\gamma^{2h}\EE\|\bxi_t - \bxi_{t+1}\|^2 +
2L_d \EE\|\bxi_t - \bxi_{t+1}\|^2
\\&\le
C\EE\|\bxi_t - \bxi_{t+1}\|^2
\end{aligned}
\end{equation}

\end{proof}

\section{Hyperparameters}
\subsection{Cart-Pole}

\begin{table*}[h]
\centering
\begin{tabular}{c|c|c|c|c|c|c|c}
\hline
 & $\gamma$ & $\eta$ & $B$ & $m$ &$S_0$ & $a$ & $H$\\
\hline
GPOMDP \citep{baxter2001infinite} & 0.99&0.005 &N/A &N/A &25 &N/A &100 \\
SVRPG \citep{xu2019improved} & 0.99 &0.0075 &10 &3 &25 &N/A &100 \\
SRVRPG \citep{xu2019sample} & 0.99&0.005 &5 & 3&25 &N/A &100\\
STORM-PG (This paper) & 0.99 &0.01&5 &N/A &10 &0.9 &100 \\
\hline
\end{tabular}
\caption{Parameters used in running the Cart-Pole experiments.}
\label{tab:tab1}
\end{table*}
\subsection{Mountain-Car}
\begin{table*}[h]
\centering
\begin{tabular}{c|c|c|c|c|c|c|c}
\hline
 & $\gamma$ & $\eta$ & $B$ & $m$ &$S_0$ & $a$ & $H$\\
\hline
SVRPG \citep{xu2019improved} & 0.99 &0.028 & 8&2 &91 &N/A &1000 \\
SRVRPG \citep{xu2019sample} & 0.99&0.018 &9 & 2&11 &N/A &1000\\
STORM-PG (This paper) & 0.99 &0.01&5 &N/A &10 & 0.79& 1000\\
\hline
\end{tabular}
\caption{Parameters used in running the Mountain-Car environments.}
\label{tab:tab2}
\end{table*}
%%%%%%%%%%%%%%%%%%%%%%%%%%%%%%%%%%%%%%%%%%%%%%%%%%%%%%%%%%%%%%%%%%%%%%%%%%%%%%%
%%%%%%%%%%%%%%%%%%%%%%%%%%%%%%%%%%%%%%%%%%%%%%%%%%%%%%%%%%%%%%%%%%%%%%%%%%%%%%%
% DELETE THIS PART. DO NOT PLACE CONTENT AFTER THE REFERENCES!
%%%%%%%%%%%%%%%%%%%%%%%%%%%%%%%%%%%%%%%%%%%%%%%%%%%%%%%%%%%%%%%%%%%%%%%%%%%%%%%
%%%%%%%%%%%%%%%%%%%%%%%%%%%%%%%%%%%%%%%%%%%%%%%%%%%%%%%%%%%%%%%%%%%%%%%%%%%%%%%
%\appendix
%\onecolumn
%
%\section{Proof of the Main Theorem}
%\label{append:proof}

%\section{Do \emph{not} have an appendix here}
%
%\textbf{\emph{Do not put content after the references.}}
%%
%Put anything that you might normally include after the references in a separate
%supplementary file.
%
%We recommend that you build supplementary material in a separate document.
%If you must create one PDF and cut it up, please be careful to use a tool that
%doesn't alter the margins, and that doesn't aggressively rewrite the PDF file.
%pdftk usually works fine. 
%
%\textbf{Please do not use Apple's preview to cut off supplementary material.} In
%previous years it has altered margins, and created headaches at the camera-ready
%stage. 
%%%%%%%%%%%%%%%%%%%%%%%%%%%%%%%%%%%%%%%%%%%%%%%%%%%%%%%%%%%%%%%%%%%%%%%%%%%%%%%
%%%%%%%%%%%%%%%%%%%%%%%%%%%%%%%%%%%%%%%%%%%%%%%%%%%%%%%%%%%%%%%%%%%%%%%%%%%%%%%

\end{document}